\documentclass[letterpaper]{article} 
\usepackage{aaai25}  
\usepackage{times}  
\usepackage{helvet}  
\usepackage{courier}  
\usepackage[hyphens]{url}  
\usepackage{graphicx} 
\urlstyle{rm} 
\usepackage{natbib}  
\usepackage{caption} 
\frenchspacing  
\setlength{\pdfpagewidth}{8.5in} 
\setlength{\pdfpageheight}{11in} 
%
\usepackage{algorithm}
\usepackage{algorithmic}
%
\usepackage{newfloat}
\usepackage{listings}
\DeclareCaptionStyle{ruled}{labelfont=normalfont,labelsep=colon,strut=off} 
\lstset{%
	basicstyle={\footnotesize\ttfamily},
	numbers=left,numberstyle=\footnotesize,xleftmargin=2em,
	aboveskip=0pt,belowskip=0pt,%
	showstringspaces=false,tabsize=2,breaklines=true}
\floatstyle{ruled}
\newfloat{listing}{tb}{lst}{}
\floatname{listing}{Listing}
%
\pdfinfo{
/TemplateVersion (2025.1)
}

\usepackage{amsmath}
\usepackage{amsthm}
\usepackage{amsfonts}
\usepackage{mathtools}
\usepackage{amssymb}
\usepackage{natbib}
\newtheorem{proposition}{Proposition}

\newtheorem{theorem}[proposition]{Theorem}

\def\eqdef{\stackrel{\text{def}}{=}}

\newcommand{\Ocal}{\mathcal{O}}

\usepackage{mathtools}

\DeclarePairedDelimiter\br{(}{)}
\DeclarePairedDelimiter\brs{[}{]}
\DeclarePairedDelimiter\brc{\{}{\}}
\DeclarePairedDelimiter\norm{\lVert}{\rVert}
\DeclarePairedDelimiter\inner{\langle}{\rangle}
\usepackage{subcaption}
\newcommand\Ex[1]{\mathbb{E}\brs*{#1}}

\newcommand{\R}{\mathbb{R}}

\usepackage{thmtools}
\usepackage{thm-restate}
\usepackage{tcolorbox}
\usepackage{booktabs}
\usepackage{multirow}
\newtheorem{theorem-rst}[proposition]{Theorem}
\newtheorem{lemma-rst}[proposition]{Lemma}
\newtheorem{proposition-rst}[proposition]{Proposition}
\newtheorem{assumption-rst}[proposition]{Assumption}
\newtheorem{claim-rst}[proposition]{Claim}
\newtheorem{corollary-rst}[proposition]{Corollary}

\setcounter{secnumdepth}{0} 

%




\title{Latent Reward: LLM-Empowered Credit Assignment \\
in Episodic Reinforcement Learning}
\author{
    Yun Qu\equalcontrib, Yuhang Jiang\equalcontrib, Boyuan Wang, Yixiu Mao, Cheems Wang, Chang Liu, Xiangyang Ji\thanks{Corresponding Author}\\
}
\affiliations{
    Tsinghua University\\
    \{qy22, jiangyh19, wangby22, myx21\}@mails.tsinghua.edu.cn,\\ cheemswang@mail.tsinghua.edu.cn, 
    liuchang2022@tsinghua.edu.cn, xyji@tsinghua.edu.cn


%
}




\begin{document}

\maketitle

%
\begin{abstract}

Reinforcement learning~(RL) often encounters delayed and sparse feedback in real-world applications, even with only episodic rewards. 
Previous approaches have made some progress in reward redistribution for credit assignment but still face challenges, including training difficulties due to redundancy and ambiguous attributions stemming from overlooking the multifaceted nature of mission performance evaluation. 
Hopefully, Large Language Model (LLM) encompasses fruitful decision-making knowledge and provides a plausible tool for reward redistribution.
Even so, deploying LLM in this case is non-trivial due to the misalignment between linguistic knowledge and the symbolic form requirement, together with inherent randomness and hallucinations in inference. 
To tackle these issues, we introduce \textbf{LaRe}, a novel LLM-empowered symbolic-based decision-making framework, to improve credit assignment. 
Key to LaRe is the concept of the \textit{Latent Reward}, which works as a multi-dimensional performance evaluation, enabling more interpretable goal attainment from various perspectives and facilitating more effective reward redistribution. 
We examine that semantically generated code from LLM can bridge linguistic knowledge and symbolic latent rewards, as it is executable for symbolic objects.
Meanwhile, we design latent reward self-verification to increase the stability and reliability of LLM inference. 
Theoretically, reward-irrelevant redundancy elimination in the latent reward benefits RL performance from more accurate reward estimation.
Extensive experimental results witness that LaRe (i) achieves superior temporal credit assignment to SOTA methods, (ii) excels in allocating contributions among multiple agents, and (iii) outperforms policies trained with ground truth rewards for certain tasks.

\end{abstract}

\section{Introduction}

Episodic reinforcement learning is dedicated to solving problems of receiving only episodic rewards, a frequently encountered situation in real-world applications of RL, such as autonomous driving~\cite{kiran2021deep} and healthcare~\cite{zeng2022optimizing}.
Credit assignment~\cite{sutton2011horde, zhang2020learning}, which involves assessing the contributions of single-step decisions~\cite{ren2021learning}, is challenging in episodic RL due to delayed and sparse feedback.
Return decomposition~\cite{arjona2019rudder}, which estimates proxy rewards by using state-action pairs to redistribute episodic rewards, has emerged in the literature as a promising direction to remedy this issue.
Subsequent works often focus on model architectures~\cite{liu2019sequence, widrich2021modern} or human-designed regression principles~\cite{ren2021learning, lin2024episodic}, overlooking the training difficulty posed by redundant information.
\citet{zhang2024interpretable} attempted to address this redundancy by employing a causal approach to filter out reward-irrelevant features but still struggled with the lack of semantic interpretation.

A prominent observation in human problem-solving is that contribution assessments often encompass a range of qualitative and quantitative factors.
For instance, soccer players' performance is evaluated not only by goals scored but also by injury prevention and coordination. 
Similarly, the rewards designed in RL are commonly a combination of multiple factors~\cite{todorov2012mujoco, qu2023hokoff}.
Previous methods~\cite{arjona2019rudder, ren2021learning} mainly focus solely on the values of final returns without tapping into the multifaceted nature of performance evaluation, resulting in poor semantic interpretability and ambiguous credit assignment.
Recently, the demonstrated capabilities of pre-trained LLM~\citep{achiam2023gpt} suggest that
integrating its prior knowledge for improved credit assignment is a promising solution.
However, the misalignment between LLM's linguistic knowledge and the symbolic representations required for specific tasks poses significant challenges, while the inherent randomness and hallucinations in LLM inference further diminish its effectiveness~\cite{peng2023self,carta2023grounding}.

Motivated by the urgent demand of depicting multifaceted performance evaluation, we propose a key concept for credit assignment, termed \textit{Latent Reward}, where different dimensions capture various aspects of task performance while eliminating reward-irrelevant redundancy.
We then devise a framework \textbf{LaRe}, which (i) derives semantically interpretable latent rewards by incorporating task-related priors from LLM and (ii) utilizes them to enhance reward decomposition.
With the insight that semantically generated code can bridge linguistic knowledge in LLM and targets in symbolic form due to its executability for symbolic objects, LaRe presents a general paradigm for integrating LLM's prior knowledge into symbolic tasks.
Specifically, pre-trained LLM is instructed by standardized prompts to code encoding functions, which encode environment information into semantically interpretable latent rewards, eliminating the need for task-specific training.
To alleviate the randomness and hallucinations in LLM reasoning, LaRe designs a self-verification mechanism for stable and reliable latent reward derivation.

Our \textbf{main contributions} are summarized as follows:
\begin{enumerate}
    \item We propose the concept of Latent Reward with semantical interpretability and reveal the multifaceted nature of step-wise contributions by introducing it in the probabilistic model of episodic rewards, aligning with human preferences and reducing redundancy.
    \item We devise a latent reward-based framework, LaRe, to leverage LLM's task-related priors for more accurate and interpretable credit assignment, which paves a way for integrating LLM into symbolic-based decision-making.
    \item We demonstrate the superiority of our method both theoretically and empirically and validate LLM's effectiveness as a generalized information encoder for latent reward derivation in practical implementation.
\end{enumerate}

The surprising phenomenon that LaRe outperforms policies trained with ground truth dense rewards for certain tasks highlights the significance of the semantically interpretable latent rewards derived through LLM's reasoning capability.
Our work reveals that merely fitting the final reward value, which primarily reflects overall performance, may be insufficient for effective reward decomposition.
This suggests that RL can be further enhanced through multifaceted performance assessments informed by task-related priors.

\begin{figure*}[htbp]
    \centering
    \begin{subfigure}[b]{0.17\linewidth}
        \centering
        \includegraphics[width=\linewidth]{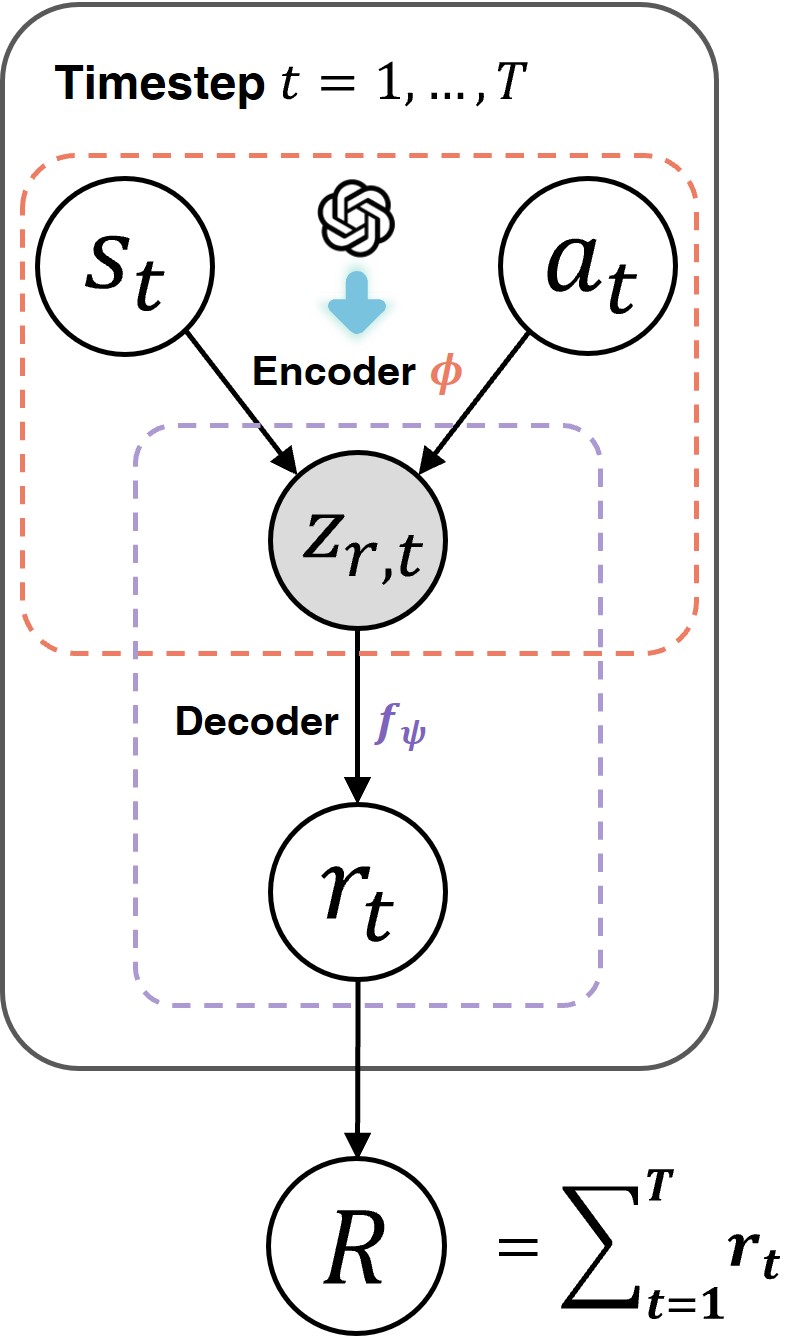}
        \caption{}
        \label{fig:return_pm}
    \end{subfigure}
    \hfill
    \begin{subfigure}[b]{0.81\linewidth}
        \centering
        \includegraphics[width=\linewidth]{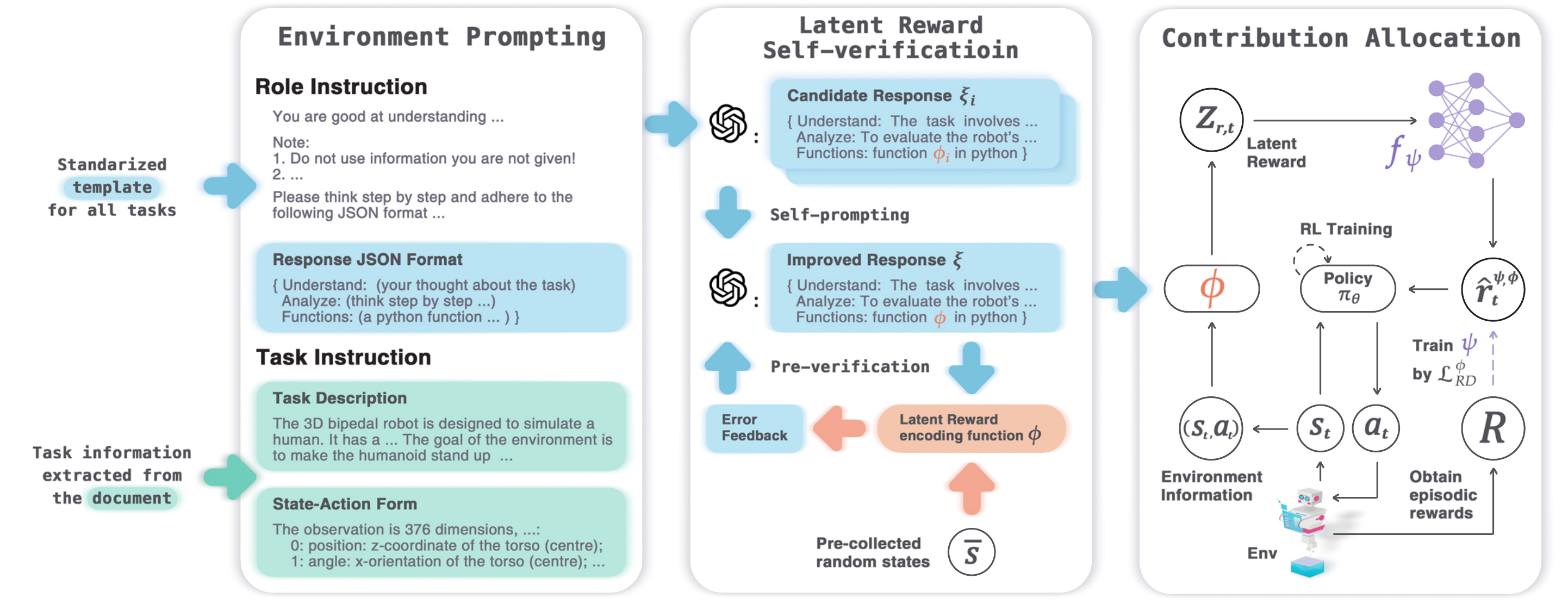}
        \caption{}
        \label{fig:framework}
    \end{subfigure}
    \caption{
    Overview of LaRe. (a) The probabilistic model of the episodic reward with the latent reward $z_{r, t}$ introduced as the implicit variable.
    (b) The LaRe framework consists of three main components: 
    (1) Environment Prompting: the task information is incorporated into a standardized prompt for LLM instructions (details are in Appendix A).
    (2) Latent Reward Self-verification: during the self-prompting phase, LLM generates $n$ candidate responses $\{\xi_i\}_{i=1}^n$ and synthesizes an improved response $\xi$. In the pre-verification phase, the executability of the function $\phi$ is verified with pre-collected random states $\bar{s}$;
    (3) Contribution Allocation: latent rewards $z_{r, t}$ are derived by $\phi$ and used to estimate proxy rewards via the reward decoder model $f_\psi$.}
    \label{fig:both}
\end{figure*}

\section{Related Works}
\subsection{Reward Redistribution}
Reward redistribution seeks to transform episodic rewards into immediate and dense proxy rewards $\hat{r}_t$, re-assigning credit for each state-action pair~\cite{ren2021learning, zhang2024interpretable}.
Some previous methods focus on reward shaping~\cite{ng1999policy, hu2020learning} and intrinsic reward design~\cite{pathak2017curiosity, zheng2021episodic}.
Return decomposition has emerged as a promising approach for tackling scenarios with severely delayed rewards.
RUDDER~\cite{arjona2019rudder} analyzes the return-equivalent condition for invariant optimal policy and proposes return decomposition via a regression task. 
Subsequent works build on it by aligning demonstration sequences~\cite{patil2020align}, using sequence modeling~\cite{liu2019sequence}, or Hopfield networks~\cite{widrich2021modern}. 
\citet{ren2021learning} propose randomized return decomposition to bridge between return decomposition~\cite{efroni2021reinforcement} and uniform reward redistribution~\cite{gangwani2020learning}. 
Other redistribution principles have been adopted in recent works, such as causal treatment~\cite{zhang2024interpretable} and randomly cutting sub-trajectories~\cite{lin2024episodic}. 
Recently, some methods have used attention-based approaches to decompose returns across time and agents in multi-agent settings~\citep{she2022agent, xiao2022agent, chen2023stas}.
Despite significant progress, previous studies have neglected redundant reward-irrelevant features and the multifaceted nature of mission performance evaluation, which impede training and cause ambiguous attributions.
While \citet{zhang2024interpretable} have acknowledged this issue to some extent, they focus solely on extracting reward-related state elements. 
In contrast, we propose the latent reward as a semantically interpretable multi-dimensional performance measurement and achieve reward-irrelevant redundancy elimination with task-related priors.

\subsection{LLM-Empowered Decision Making}

The remarkable capabilities of LLMs, as demonstrated across various downstream tasks~\citep{touvron2023llama, brown2020language}, underscores their potential as a promising solution for decision-making~\citep{wang2023survey}. 
Some works focus on high-level control by employing LLMs as planners with predefined skills or APIs, which have proven highly successful~\citep{liang2023code, yao2022react, shinn2023reflexion,  zhu2023ghost, wang2023voyager,zhang2024large}.
However, when directly applied to low-level control without predefined skills, the misalignment between LLMs' linguistic knowledge and the symbolic states and actions required for specific tasks poses a significant challenge~\citep{peng2023self,qu2024choices}. 
Some works address this issue by constructing text-based environments but at the cost of considerable manual effort~\citep{du2023guiding, carta2023grounding}. 
Recently, LLMs have been integrated with RL to enhance low-level control~\cite{cao2024survey}.
Some approaches fine-tune LLMs as policies~\citep{carta2023grounding, shi2024LaMo} or use LLM for history compression~\citep{paischer2022history}.
Other studies~\cite{zhang2023bootstrap,su2023subgoal,shukla2023lgts} focus on goal-conditioned RL with LLMs as subgoal selectors, but these often require predefined skills or subgoals. 
We seek to leverage LLMs as tools to enhance RL, aligning with LLM-based reward design methods~\citep{kwon2023reward, song2023self, wang2024llm}.
However, our method ensures a more reliable and optimized use of LLM priors by strategically designing for improved response quality and integrating them into latent rewards during the training process for optimization rather than relying on unreliable direct use.

\section{Preliminary}

The environments in reinforcement learning are generally formulated by a Markov Decision Process~(MDP; \citet{bellman1966dynamic}), which can be defined as a tuple $\mathcal{M}=\langle \mathcal{S},\mathcal{A},\gamma,P,r \rangle$, where $\mathcal{S}$ and $\mathcal{A}$ denote the state space and action space with cardinalities $\norm{\mathcal{S}}$ and $\norm{\mathcal{A}}$, respectively.
$\gamma \in [0, 1)$ is the discount factor. 
$P(s'|s, a)$ represents the environment's state transition distribution, and $r(s,a)$ denotes the reward function. 
The goal of reinforcement learning is to find an optimal policy $\pi: \mathcal{S} \rightarrow \mathcal{A}$ that maximizes the expected cumulative rewards with the initial state distribution $\eta$ and episode length $T$, which is expressed as $J(\pi)=\mathbb{E}\left[\sum_{t=1}^T \gamma^t r\left(s_t,\pi(s_t)\right)|s_0\sim\eta, s_{t+1}\sim P\left(\cdot|s_t,\pi(s_t)\right)\right]$.

Real-world scenarios often pose challenges such as delayed and sparse feedback~\cite{ke2018sparse,  han2022off}. 
An extreme case is episodic RL, where only a non-zero reward $R(\tau)$ at the end of each trajectory $\tau$ is received~\cite{ren2021learning}. The goal of episodic reinforcement learning is to maximize the expected episodic rewards, i.e., $J_{ep}(\pi)=\mathbb{E}\left[R(\tau)|s_0\sim\eta, a_t\sim\pi(\cdot|s_t), \tau=	\left \langle s_0, a_0, s_1,...,s_T\right\rangle \right]$.
A common assumption in episodic RL is the existence of a sum-form decomposition of the episodic rewards, i.e., $R(\tau) = \sum_{t=1}^T r(s_t,a_t)$~\cite{zhang2024interpretable}.

\section{Latent Reward}

This section elaborates on LaRe's motivation and implementation.
We explain the rationale behind the \textit{Latent Reward} and analyze the underlying probabilistic model. 
We propose a framework \textit{LaRe} that leverages LLM's reasoning and generalization capabilities  while addressing the challenges of its application  to incorporate task-related prior for reliably deriving the latent reward.
We theoretically prove that by reducing reward-irrelevant redundancy, the latent reward enhances reward modeling and improves RL performance.

\subsection{Motivation}
In human endeavors, individual contributions are typically assessed from multiple angles for a comprehensive evaluation.
However, current research on episodic credit assignment often focuses solely on regressing the final reward values~\cite{arjona2019rudder, efroni2021reinforcement}, overlooking that rewards are derived from the evaluation of various implicit factors, such as costs and efficiency.
Inspired by the intrinsic need to evaluate task performance from multiple perspectives, we propose the concept of the \textit{Latent Reward}.
Conceptually, the different dimensions of latent reward capture various aspects of task performance. 

Formally, the reward $r$ is a projection of the latent reward $z_r$ from a space $\mathcal{D}$ with cardinality $\norm{\mathcal{D}}$ onto the real number field $\mathbb{R}$.
A function $f: \mathcal{D} \rightarrow \mathbb{R}$ should exist such that each reward in the reward codomain has at least one latent reward encoding.
With the introduction of the latent reward, as illustrated in Figure \ref{fig:return_pm}, we construct a new probabilistic model of the episodic reward, revealing the multifaceted nature of the step-wise contribution, which better serves RL training.
We have,
\begin{align}
&\quad p(R \vert s_{1:T}, a_{1:T}) \nonumber 
= \int p(R, r_{1:T}, z_{r, 1:T}| s_{1:T}, a_{1:T}) \, \mathrm{d}z \, \mathrm{d}r \nonumber \\
&=\int \left[ \prod_{t=1}^T \underbrace{p(r_t | z_{r, t})}_{decoder f} \underbrace{p(z_{r, t} | s_t, a_t )}_{encoder \phi} \right] p(R | r_{1:T}) \, \mathrm{d}z \, \mathrm{d}r
\end{align}
where the $\phi: \mathcal{S \times A} \rightarrow \mathcal{D}$ is the function deriving the latent reward from environment information.
Intuitively, the latent reward's multiple dimensions are obtained by compressing environmental information based on prior knowledge, thus acting as an information bottleneck~\cite{tishby2000information} tailored to the task objectives.

Compared to directly estimating step-wise rewards from raw states, the latent reward offers significant advantages in interpretability, as each dimension reflects a specific aspect of task performance. 
Additionally, in episodic RL, where only the return of an episode provides weak signals, directly modeling rewards can be challenging. 
Learning from latent rewards better aligns with task objectives and simplifies network training by reducing reward-irrelevant redundancy.

A naive approach is to obtain the latent reward via an information bottleneck method, which suffers from limited linguistic interpretability and high computational costs due to separate encoder training for each task.
In contrast, LLM's pre-training has captured more compact representations in the form of tokens, facilitating better cross-task generalization. 
Therefore, leveraging LLM's prior knowledge enables more efficient extraction of interpretable and multifaceted task performance metrics, the latent reward, from the redundant environmental information.

\begin{algorithm}[t]
\caption{LaRe}
\label{alg:algorithm}
\textbf{Input}: LLM $\mathcal{M}$, task information $task$, role instruction $role$, candidate responses number $n$, pre-collected random state-action pairs $\Bar{s}$, max episodes $\mathcal{N}^{max}$\\
\textbf{Output}: policy network~$\pi_\theta$, reward decoder model~$f_\psi$

\begin{algorithmic}[1] 
\STATE Initialize the policy network parameter $\theta$, the reward decoder model parameter $\psi$, and the replay buffer $\mathcal{B}$.
\STATE Obtain response $\xi$ by executing Eq.~\eqref{init_gen} and Eq.~\eqref{sum_gen}.
\STATE Repeat Eq.~\eqref{feedback} until obtaining an executable $\phi$.
\FOR{$episode=1$ \textbf{to} $\mathcal{N}^{max}$}
\STATE Sample a trajectory $\tau$ using current policy.
\STATE $\mathcal{B}\gets\mathcal{B}\cup\{\tau\}$. Sample a batch $B=\{\tau_i\}_{i=1}^{|B|}$ from $\mathcal{B}$.
\STATE Estimate latent reward enhanced return decomposition loss $\mathcal{L}^\phi_{RD}(\psi)$ with Eq.~\eqref{eq:our_rd} and update reward decoder model $f_\psi$
\begin{equation}
    \psi \leftarrow \psi - \alpha\mathcal{r}_\psi\mathcal{L}^\psi_{RD}(\psi)
\end{equation}
\STATE Perform policy optimization using any RL algorithm with predicted proxy rewards $\hat{r}^{\psi, \phi} = f_\psi(\phi(s, a))$.
\ENDFOR
\end{algorithmic}
\end{algorithm}
\subsection{Framework}
Leveraging LLM's prior knowledge and reasoning capabilities to derive latent rewards for credit assignment presents three main challenges:
(1) instructing LLM to derive latent rewards for various tasks with minimal information and effort, 
(2) addressing the linguistic-symbolic misalignment while mitigating randomness and hallucinations in LLM inference to derive symbolic latent rewards reliably, and 
(3) applying latent rewards to enhance contribution allocation at each timestep.
This section introduces three specifically designed components in the proposed LaRe, as demonstrated in Fig.~\ref{fig:framework} and Algorithm~\ref{alg:algorithm}:

\textbf{Environment Prompting.} 
To instruct LLM, we design standardized prompts easily transferable across environments, which consist of a templated role instruction~($role$) and specific task instruction~($task$), as shown in Fig.~\ref{fig:framework}. 
The role instruction is consistent across tasks and guides LLM to think in a predefined manner: understand the task and state $\rightarrow$ identify reward-related factors $\rightarrow$ generate the latent reward encoding function. 
Only the necessary task description and state forms for a specific task are required, which can be easily extracted from the task document.
The task description mainly includes the environment profile and task objective.
The state forms detail the meanings of dimensions in the state space.
Our design significantly reduces the burden of labor-intensive prompt engineering across tasks.

\textbf{Latent Reward Self-verification.}
Since LLM's knowledge is encoded in language while underlying tasks are represented by symbolic states, this misalignment impedes LLM's direct application. 
To effectively integrate LLM, we propose generating the latent reward encoding function using LLM’s coding capabilities. 
The rationale is that semantically generated code can bridge the gap between linguistic knowledge and symbolic latent rewards, as its execution is symbolic and tailored to specific tasks, as previously confirmed~\cite{wang2024llm}. 
Given the inherent randomness and hallucinations in LLM inference, inspired by recent work~\cite{shinn2023reflexion, ma2023eureka}, we propose a latent reward LLM generation process with self-verification, which includes \textit{self-prompting} and \textit{pre-verification} to enhance stability and reliability.

In the self-prompting phase, LLM $\mathcal{M}$ firstly generates $n$ candidate responses, each including a code implementation of the latent reward encoding function: 
\begin{equation}
\xi_1, \xi_2, \ldots, \xi_n \leftarrow \mathcal{M}(task, role)
\label{init_gen}
\end{equation}
These candidate responses are then fed into the prompt, and LLM is prompted to summarize an improved response:
\begin{equation}
\xi \leftarrow \mathcal{M}(task, role, \xi_{1 \ldots n})
\label{sum_gen}
\end{equation}

Regarding pre-verification, leveraging the standardized response template, the latent reward encoding function $\phi$ can be easily extracted from the response $\xi$, which takes in a state-action pair $s,a$ and outputs a latent reward $z_{r}=\phi(s)=[z_r^1,\ldots, z_r^d]$. 
We then verify $\phi$ with pre-collected random state-action pairs $\Bar{s}$ and provide error feedback to LLM until $\phi$ is executable:
\begin{equation}
    err \leftarrow verify(\phi, \Bar{s});\ \xi \leftarrow \mathcal{M}(task, role, \xi_{1 \ldots n}, err)
\label{feedback}
\end{equation}

Self-verification significantly improves response quality by reducing randomness in identifying latent rewards and ensuring code executability. 
LLM's clear linguistic responses and transparent thought processes provide high interpretability, facilitating human evaluation and manual intervention. Empirical results demonstrate that our framework achieves satisfactory results without requiring multi-iteration evolutionary optimization~\cite{ma2023eureka}.

\begin{figure*}[htbp]
  \centering
    \includegraphics[width=\linewidth]{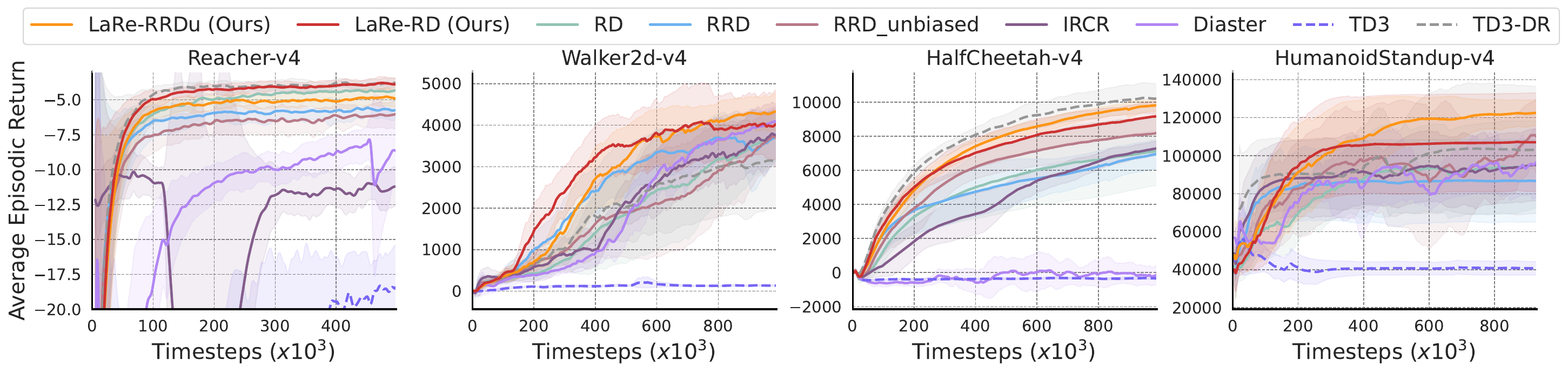}
    \caption{Average episode return for tasks with different state space dimensions in MuJoCo. Notably, \textbf{TD3-DR} is trained with dense rewards.
    }
    \label{fig:mujoco}
\end{figure*}

\textbf{Contribution Allocation.} Building on the latent reward encoding function, we adopt a latent reward enhanced return decomposition, implemented based on \citet{efroni2021reinforcement}.
Let $f_\psi$ be a neural network decoder parameterized by $\psi$.
The new objective of reward modeling can be formulated as:
\begin{equation}
\footnotesize
    \min_\psi \mathcal{L}^\phi_{RD}(\psi) = \mathbb{E}_{\tau\sim D} \left[\left(R(\tau)-\sum_{t=1}^T f_\psi(\phi(s_t, a_t))\right)^2\right]
    \label{eq:our_rd}
\end{equation}
Proxy rewards, $\hat{r}^{\psi, \phi} = f_\psi(\phi(s, a))$, derived from latent rewards, are incorporated into the RL training process.
Leveraging the enhanced temporal credit assignment enabled by the latent reward's multifaceted nature, these rewards improve RL training performance by alleviating the issue of delayed and sparse feedback.

Additionally, we empirically find that the latent reward enhances credit assignment among agents.
This well matches the intuition, as evaluating agents within a team is also a form of multifaceted credit assignment.
Consequently, our method provides a practical solution for episodic multi-agent RL, with reduced computational costs and improved performance, making it well-suited for real-world scenarios.

In implementations, we use GPT-4o from OpenAI API, with prompt details provided in Appendix A.
In practice, we have set the random variables deterministically for the sake of convenience, which is a common setting in previous works~\cite{arjona2019rudder}.

\subsection{Analysis}
LLM-empowered latent rewards retain semantic interpretability while reducing reward-irrelevant redundancy, which is theoretically proven to boost RL performance by learning a better reward model than the state-based methods.

Previous works commonly minimize the least squares error between the episodic rewards and the sum of predicted proxy rewards $\hat{r}(s_t, a_t)$ to learn reward models with raw states as inputs~\cite{ren2021learning}.
The surjective function $\phi(s, a): \mathcal{S}\times\mathcal{A}\rightarrow\mathcal{D}, \mathcal{\norm{D}}<\mathcal{\norm{S}}\mathcal{\norm{A}}$  reduces redundant, reward-irrelevant features from the state-action space.
Theoretically, built upon \citet{efroni2021reinforcement}, assuming access to a latent reward function $\phi$ that satisfies $\exists f^*, s.t., r=\hat{r}=f^*(\phi(s, a))$, we derive a more precise concentration bound for estimating $r$ and a tighter RL regret bound compared to the case without the latent reward. 
Please refer to Appendix B for the proof.

\begin{restatable}[\textbf{Tighter Concentration Bound of Reward}]{proposition-rst}{rewardConcentration}\label{proposition: concentration of reward}
Let $\lambda>0$ and $A^\phi_{k}\eqdef (H^\phi_k)^TH^\phi_k +\lambda I_{\mathcal{\norm{D}}}$. For any $\delta\in\br*{0,1}$, with probability greater than $1-\delta/10$ uniformly for all episode indexes $k\ge0$, it holds that 
\small
$$\norm{r-\hat{r}^\phi_k}_{A^\phi_{k}} \leq  \sqrt{\frac{1}{4}T \mathcal{\norm{D}}\log\br*{\frac{1+kT^2/\lambda}{\delta/10}}}+\sqrt{\lambda \mathcal{\norm{D}}} \eqdef l^\phi_k < l_k. $$
\end{restatable}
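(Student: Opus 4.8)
The plan is to cast the latent-reward return decomposition of Eq.~\eqref{eq:our_rd} as a regularized linear least-squares problem and then invoke the self-normalized martingale tail inequality for vector-valued martingales (Abbasi-Yadkori, P\'al, and Szepesv\'ari), specializing the feature dimension to $\norm{\mathcal{D}}$. First I would make the realizability assumption explicit in its linear form: since $\phi$ is assumed to satisfy $r=f^*(\phi(s,a))$, take the decoder class to be linear and write $f^*(\phi)=\inner{r,\phi}$, identifying the true reward with the coefficient vector $r\in\R^{\norm{\mathcal{D}}}$ appearing in the statement, with $\norm{r}\le\sqrt{\norm{\mathcal{D}}}$ and normalized features $\norm{\phi(s,a)}\le1$. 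For the $k$-th collected trajectory the observed return satisfies $R(\tau_k)=\inner{r,\sum_{t=1}^T\phi(s_t,a_t)}+\varepsilon_k$, so the effective regressor is the aggregated feature $x_k\eqdef\sum_{t=1}^T\phi(s_t,a_t)$ and $\varepsilon_k$ is the return noise, sub-Gaussian conditioned on the past. The ridge minimizer of Eq.~\eqref{eq:our_rd} over linear decoders is exactly $\hat{r}^\phi_k$, and by construction $A^\phi_k=(H^\phi_k)^TH^\phi_k+\lambda I_{\norm{\mathcal{D}}}=\sum_{j\le k}x_jx_j^T+\lambda I_{\norm{\mathcal{D}}}$, where $H^\phi_k$ stacks the $x_j^T$ as rows.

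Next I would apply the self-normalized concentration inequality to the martingale $\sum_{j\le k}x_j\varepsilon_j$, which yields, uniformly for all $k\ge0$ with probability at least $1-\delta/10$, a bound of the form $\norm{r-\hat{r}^\phi_k}_{A^\phi_k}\le\sqrt{2\sigma^2\log\br*{\det{A^\phi_k}^{1/2}\det{\lambda I}^{-1/2}/(\delta/10)}}+\sqrt{\lambda}\norm{r}$; the anytime (uniform in $k$) guarantee comes for free from the method-of-mixtures argument underlying that inequality, so no separate union bound over episodes is needed. I would then control the log-determinant via the trace--determinant (AM--GM) inequality: since $\norm{x_j}\le T\max_{s,a}\norm{\phi(s,a)}\le T$, one has $\trace{A^\phi_k}\le\lambda\norm{\mathcal{D}}+kT^2$, hence $\det{A^\phi_k}\le\br*{\lambda+kT^2/\norm{\mathcal{D}}}^{\norm{\mathcal{D}}}$. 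Substituting this, using the crude but clean upper bound $\log(1+kT^2/(\lambda\norm{\mathcal{D}}))\le\log(1+kT^2/\lambda)$, and bounding $\sqrt{\lambda}\norm{r}\le\sqrt{\lambda\norm{\mathcal{D}}}$ collapses the right-hand side to the claimed $l^\phi_k$. The leading factor $\tfrac14 T$ is pinned down by the noise scale: with per-step reward noise bounded in a unit interval, each step is $\tfrac14$-sub-Gaussian and the aggregated return noise is $\tfrac{T}{4}$-sub-Gaussian, so $2\sigma^2\cdot\tfrac{\norm{\mathcal{D}}}{2}=\tfrac14 T\norm{\mathcal{D}}$ matches the statement exactly.

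Finally, the strict inequality $l^\phi_k<l_k$ would follow by monotonicity in the feature dimension. The state-based baseline of \citet{efroni2021reinforcement} is the identical construction with raw state--action features of dimension $\norm{\mathcal{S}}\norm{\mathcal{A}}$ in place of $\norm{\mathcal{D}}$; because the aggregated-feature norm is still bounded by $T$, the logarithmic term is unchanged and only the dimension differs. Since both summands defining the bound are strictly increasing in that dimension, and $\phi$ is surjective with $\norm{\mathcal{D}}<\norm{\mathcal{S}}\norm{\mathcal{A}}$, I obtain $l^\phi_k<l_k$ termwise.

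I expect the main obstacle to be the second step: correctly setting up the martingale and its sub-Gaussian scale in the trajectory-feedback regime, where only the per-episode sum of rewards is observed rather than the individual $r(s_t,a_t)$. The aggregation over $T$ steps is what propagates the $T$-dependence into both the regressor-norm bound $\norm{x_j}\le T$ and the effective noise level, so obtaining the factor $kT^2$ inside the logarithm together with the leading $\tfrac14 T$ out front requires care in how the return noise is normalized and how the aggregated features enter $A^\phi_k$. Everything downstream---the determinant estimate and the dimension comparison---is routine once this specialization is in place.
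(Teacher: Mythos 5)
Your proposal matches the paper's proof essentially step for step: the paper also identifies the regressor with the aggregated (one-hot) latent-reward frequency vector $\hat{h}^\phi_k$ satisfying $\norm{\hat{h}^\phi_k}_2\le\norm{\hat{h}^\phi_k}_1=T$, takes the return noise $\eta_k$ to be $\sqrt{T/4}$-sub-Gaussian as a centered sum of $T$ conditionally independent $[0,1]$-bounded terms, assumes $\norm{r}_2\le\sqrt{\norm{\mathcal{D}}}$, and invokes Theorem~2 of \citet{abbasi2011improved} (whose statement already absorbs the determinant--trace step you rederive) before concluding $l^\phi_k<l_k$ by the same termwise dimension comparison with $\norm{\mathcal{D}}<\norm{\mathcal{S}}\norm{\mathcal{A}}$. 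Your only cosmetic deviation is phrasing the setup with general normalized features $\norm{\phi(s,a)}\le 1$ rather than the paper's tabular indicator encoding, which changes nothing in the argument.
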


\begin{restatable}[\textbf{Tighter Regret Bound}]{proposition-rst}{OFULrlPer}\label{theorem: OFUL for RL with per trajectory feedback}
For any $\delta\in (0,1)$ and all episode numbers $K\in\mathbb{N}^+$, the regret of RL $\rho^\phi(K) \eqdef \sum_{k=1}^K\left(V^*-V^{\phi,\pi_k}\right)$
holds with probability greater than $1-\delta$ that,
\scriptsize
$$
\rho^\phi(K)\leq \Ocal\br*{T \mathcal{\norm{D}} \sqrt{ K} \log\br*{\frac{KT}{\delta}}} < \Ocal\br*{T \mathcal{\norm{S}\norm{A}} \sqrt{ K} \log\br*{\frac{KT}{\delta}}}.
$$
\end{restatable}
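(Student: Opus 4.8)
The plan is to follow the optimism-in-the-face-of-uncertainty (OFUL) template for trajectory-feedback RL in the style of \citet{efroni2021reinforcement}, using the confidence set supplied by Proposition~\ref{proposition: concentration of reward} as the central building block, and to track carefully how the effective dimension collapses from $\norm{\mathcal{S}}\norm{\mathcal{A}}$ to $\norm{\mathcal{D}}$ once rewards are expressed through the feature map $\phi$. First I would fix the optimistic learner: at episode $k$ it maintains the ellipsoidal confidence set $\mathcal{C}_k \eqdef \{\, r' : \norm{r' - \hat{r}^\phi_k}_{A^\phi_k} \le l^\phi_k \,\}$ and plays the policy $\pi_k$ that is greedy with respect to the most optimistic reward in $\mathcal{C}_k$ combined with the transition model, treated as in \citet{efroni2021reinforcement}. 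Proposition~\ref{proposition: concentration of reward} guarantees $r \in \mathcal{C}_k$ for all $k$ simultaneously with probability at least $1-\delta/10$; I would condition on this good event throughout and spend the remaining failure budget on the transition and martingale terms, so that the final statement holds with probability $1-\delta$.

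On the good event, optimism yields $V^* \le \tilde V_k$ and hence $V^* - V^{\phi,\pi_k} \le \tilde V_k - V^{\phi,\pi_k}$. The next step is the value-difference (simulation) lemma, which rewrites this gap as the occupancy-weighted sum of pointwise reward over-estimates along the trajectory generated by $\pi_k$. Because every candidate reward in $\mathcal{C}_k$ differs from $\hat r^\phi_k$ by at most $l^\phi_k$ in the $A^\phi_k$-norm, a Cauchy--Schwarz step gives the pointwise control $|r'(s,a) - r(s,a)| \le 2\, l^\phi_k \norm{\phi(s,a)}_{(A^\phi_k)^{-1}}$, so the per-episode regret is bounded by $2\, l^\phi_k \sum_{t=1}^T \norm{\phi(s_t,a_t)}_{(A^\phi_k)^{-1}}$. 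The latent reward enters only through this reward-estimation term, which is exactly the quantity being compared in the claim.

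Then I would sum over $k$, apply Cauchy--Schwarz across episodes, and invoke the elliptical-potential (determinant--trace) lemma on the matrices $A^\phi_k$. Since $\phi$ maps into $\R^{\norm{\mathcal{D}}}$, these Gram matrices are $\norm{\mathcal{D}}\times\norm{\mathcal{D}}$, so the telescoping log-determinant contributes precisely the factor $\norm{\mathcal{D}}$ rather than $\norm{\mathcal{S}}\norm{\mathcal{A}}$, and the double sum is $\Ocal(\sqrt{\norm{\mathcal{D}}\, K T \log(\cdots)})$. Multiplying by the radius $l^\phi_k = \Ocal(\sqrt{T \norm{\mathcal{D}} \log(\cdots)})$ from Proposition~\ref{proposition: concentration of reward} produces the claimed $\Ocal(T \norm{\mathcal{D}} \sqrt{K} \log(KT/\delta))$. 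The strict inequality is then immediate: the bound is monotone increasing in the feature dimension, and since $\phi$ is surjective with $\norm{\mathcal{D}} < \norm{\mathcal{S}}\norm{\mathcal{A}}$, replacing $\norm{\mathcal{D}}$ by $\norm{\mathcal{S}}\norm{\mathcal{A}}$ can only enlarge it.

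I expect the main obstacle to be the per-episode decomposition: correctly linking the scalar value gap $\tilde V_k - V^{\phi,\pi_k}$ to the reward-confidence norm $\norm{\cdot}_{A^\phi_k}$, since under trajectory feedback the reward is observed only in aggregate and the value gap mixes reward-estimation error with transition-model error through the occupancy measure. Disentangling these contributions, and ensuring the elliptical-potential lemma is applied to the $\norm{\mathcal{D}}$-dimensional features so that the dimension, and not a spurious $\norm{\mathcal{S}}\norm{\mathcal{A}}$ factor, governs the log-determinant growth, is where the argument must be handled with care; the remaining manipulation of the $T$ and $\log(KT/\delta)$ factors is routine bookkeeping.
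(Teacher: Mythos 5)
Your skeleton (optimism with the radius from Proposition~\ref{proposition: concentration of reward}, elliptic-potential summation, dimension $\norm{\mathcal{D}}$ driving both factors, and the strict inequality from $\norm{\mathcal{D}}<\norm{\mathcal{S}}\norm{\mathcal{A}}$) matches the paper, but your per-episode step contains a genuine gap. You bound the episode regret pointwise as $2\,l^\phi_k\sum_{t=1}^T\norm{\phi(s_t,a_t)}_{(A^\phi_k)^{-1}}$ and then invoke the determinant--trace (elliptic potential) lemma on these per-step feature norms. That lemma does not apply here: the regression design rows are the \emph{episode-aggregated} frequency vectors $\hat{h}^\phi_k$ (with $\norm{\hat{h}^\phi_k}_1=T$), so $A^\phi_k=\lambda I_{\norm{\mathcal{D}}}+\sum_{l\le k}\hat{h}^\phi_l(\hat{h}^\phi_l)^\top$, and the telescoping log-determinant argument controls $\sum_k\norm{\hat{h}^\phi_k}^2_{(A^\phi_{k-1})^{-1}}$, not $\sum_k\sum_t\norm{\phi(s_t,a_t)}^2_{(A^\phi_{k-1})^{-1}}$. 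Writing $\hat{h}^\phi_k=\sum_t e_{z_{r,t}}$, the outer product $\br*{\sum_t e_{z_{r,t}}}\br*{\sum_t e_{z_{r,t}}}^\top$ neither dominates nor is dominated by $\sum_t e_{z_{r,t}}e_{z_{r,t}}^\top$ in the PSD order, so the Gram matrix never accumulates the per-step outer products your potential step needs, and the triangle inequality relates the two sums in the wrong direction for you. The paper avoids this entirely by staying at the trajectory level: optimism is taken over expected frequency vectors, $\max_\pi\br*{(h^\phi_\pi)^\top\hat{r}^\phi_{k-1}+l^\phi_{k-1}\norm{h^\phi_\pi}_{(A^\phi_{k-1})^{-1}}}$, a single Cauchy--Schwarz gives the per-episode bound $2\,l^\phi_{k-1}\norm{h^\phi_{\pi_k}}_{(A^\phi_{k-1})^{-1}}$, and the sum over episodes is handled by the adapted expected-potential lemma (Lemma~\ref{lemma: oldlemma8} of \citet{efroni2021reinforcement}, instantiated as Eq.~\eqref{eq7}), which also absorbs the martingale gap between the realized $\hat{h}^\phi_k$ and the expected $h^\phi_{\pi_k}$ and accounts for the $\delta/2$ of the failure budget alongside the $\delta/10$ from Proposition~\ref{proposition: concentration of reward}.

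Relatedly, the obstacle you flag as the main difficulty --- disentangling reward-estimation error from transition-model error via a simulation lemma --- is an artifact of choosing the wrong decomposition rather than a feature of the problem. In this analysis the problem is treated as a linear bandit over the polytope of policy-induced expected latent-reward frequency vectors $h^\phi_{\pi}=\Ex{\hat{h}^\phi_k\mid F^s_{k-1}}$; there is no transition-estimation term anywhere in the paper's proof, and no probability budget is spent on one. Keeping optimism and Cauchy--Schwarz at the level of $h^\phi_\pi$ both eliminates the transition term you were planning to control and makes the inner product $(h^\phi_{\pi_k})^\top(\hat{r}^\phi_{k-1}-r)$ directly compatible with the $A^\phi_{k-1}$-norm in which the confidence set is stated. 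With those two repairs --- trajectory-level optimism and the expected-potential lemma in place of a per-step elliptic potential --- your argument collapses onto the paper's proof and yields $\rho^\phi(K)\leq\Ocal\br*{T\norm{\mathcal{D}}\sqrt{K}\log\br*{KT/\delta}}$ as claimed.
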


The concentration bound reflects the performance of the reward model by quantifying the distance between proxy rewards $\hat{r}^\phi_k$ and true rewards $r$, while the regret quantifies RL performance. Proposition~\ref{proposition: concentration of reward} and \ref{theorem: OFUL for RL with per trajectory feedback} show that these bounds are proportional to $\mathcal{\norm{D}}$, which are lower than the bound with raw state-action space. Overall, the latent reward improves reward function learning and boosts RL performance.

\section{Experiments}

We evaluate LaRe\footnote{Our code is available at \url{https://github.com/thu-rllab/LaRe}} on two widely used benchmarks in both single-agent and multi-agent settings: MuJoCo locomotion benchmark~\cite{todorov2012mujoco} and Multi-Agent Particle Environment (MPE)~\cite{lowe2017multi}.  
Additionally, we perform ablation studies and further analyses to validate LaRe's components and assess its properties.

\subsection{Experimental Setups}

For MuJoCo, we adopt four tasks from Gymnasium~\cite{towers_gymnasium_2023}.
For MPE, we employ six tasks from two scenarios, \textit{Cooperative-Navigation (CN)} and \textit{Predator-Prey (PP)}, featuring varying numbers of agents (6, 15, 30), which are based on \citet{chen2023stas} with minor modifications to provide individual rewards to each agent at every step. 
All tasks are episodic, with a single non-zero episodic reward, equivalent to the cumulative rewards.
Thus, multi-agent tasks require both temporal and inter-agent credit assignment.
Moreover, we evaluate LaRe in more complex scenarios from SMAC~\cite{samvelyan19smac} and \textbf{a newly designed task, \textit{Triangle Area}}, in Appendix D and E.

\begin{figure*}[htbp]
  \centering
    \includegraphics[width=0.82\linewidth]{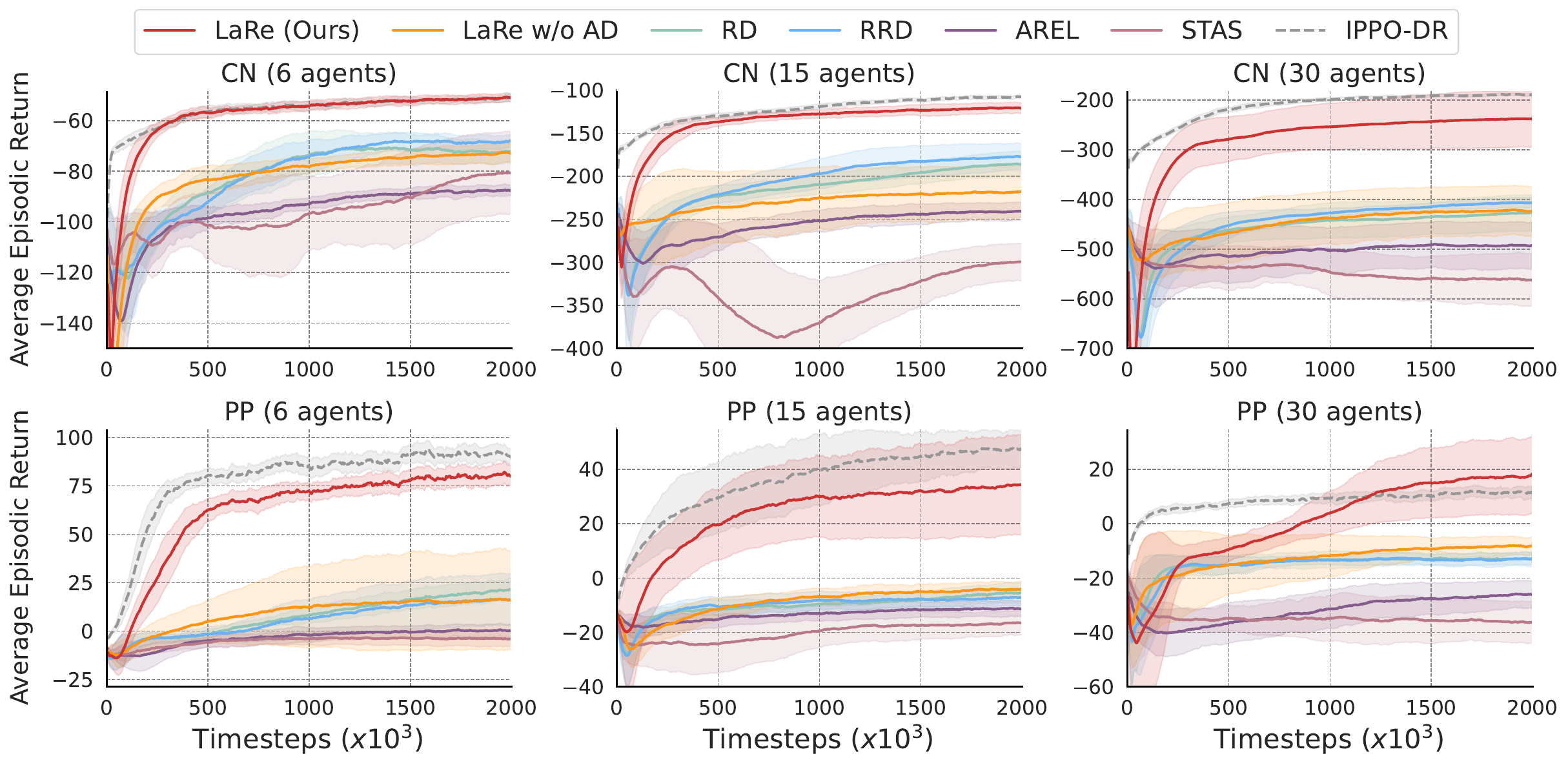}
    \caption{Average episode return for tasks with a varied number of agents in MPE. 
    Notably, \textbf{IPPO-DR} is trained with dense rewards and \textbf{LaRe w/o AD} represents LaRe without credit assignment among agents.}
    \label{fig:mpe}
\end{figure*}
We compare LaRe with SOTA return decomposition baseline algorithms: RD~\cite{efroni2021reinforcement}, IRCR~\cite{gangwani2020learning}, Diaster~\cite{lin2024episodic}, RRD and RRD\_unbiased~\cite{ren2021learning}, as well as those designed for multi-agent settings: AREL~\cite{xiao2022agent} and STAS~\cite{chen2023stas}. 
The introduction and implementation details of these baselines are provided in Appendix C.

LaRe is compatible with various RL algorithms, and we adopt TD3~\cite{fujimoto2018addressing} for single-agent and IPPO~\cite{yu2022surprising} for multi-agent as the base algorithm, consistent with prior works~\cite{ren2021learning, chen2023stas}.
Each algorithm runs on five random seeds, with the mean performance and standard deviation reported. 
Further details and results are available in the Appendix.

\subsection{The Superiority of LaRe}

\textbf{Single-Agent.}
To verify the compatibility of our method with various return decomposition algorithms, we implement two variants, \textbf{LaRe-RD} and \textbf{LaRe-RRDu}, based on RD and RRD-unbiased, respectively. 
As shown in Fig.~\ref{fig:mujoco}, the poor performance of TD3 and IRCR highlights the importance of assigning individual credits. 
Our method, LaRe, consistently outperforms SOTA baselines on MuJoCo tasks, demonstrating higher sample efficiency and better convergence. 
Both variants of LaRe surpass the corresponding baselines, highlighting the efficacy of semantically interpretable latent rewards in credit assignment. 
The effectiveness of LaRe in tasks with large state spaces significantly supports our analysis, underscoring the significance of redundancy elimination with task-related priors in the latent reward and explaining the poor performance of baselines.

\textbf{Multi-Agent.}
Fig.~\ref{fig:mpe} depicts comprehensive comparisons between LaRe and various baselines in MPE. 
LaRe is implemented based on RD and demonstrates superior performance across tasks with different numbers of agents compared to all SOTA baselines, confirming the efficacy of latent rewards in temporal credit assignment on multi-agent tasks.
We also include a variant \textbf{LaRe w/o AD} (without agent decomposition), where the proxy rewards of different agents are averaged at the same time step.
The significant performance drop highlights the necessity of credit assignment at the agent level and the effectiveness of LaRe in this regard.
We believe the semantically interpretable latent rewards account for this since assessing different agents' contributions is also intuitively a form of multifaceted credit assignment.
AREL and STAS perform relatively poorly, particularly as the number of agents increases, likely because reward-irrelevant items in the original state interfere significantly with attention-based credit assignment.

\textbf{Comparable with Dense Rewards.}
We include \textbf{TD3-DR} and \textbf{IPPO-DR} in MuJoCo and MPE, respectively, training with ground truth dense rewards.  
Remarkably, LaRe's performance is comparable to or even exceeds theirs despite not relying on manually designed rewards.
The reason is that while ground true rewards reflect agents' performance levels, overall stability is still affected by implicit factors like costs and efficiency, which are adequately captured by our proposed LLM-based latent reward.
This finding emphasizes leveraging task-related prior information for multifaceted performance evaluation can further enhance RL performance beyond merely relying on final reward values.

\subsection{Delving into Latent Rewards}
We conduct experiments to analyze the specific nature of the latent rewards and the reason for their superior performance.

\textbf{Semantic Analysis of Multifaceted Measurement.}
We analyze the LLM-generated latent reward functions and use \textit{HumanoidStandup-v4} as an instance. 
The task objective is to have the humanoid robot stand up and maintain balance by applying torques to hinges~\cite{towers_gymnasium_2023}. 
As shown in Fig.~\ref{fig:hetero}(b), LLM demonstrates a correct understanding of the task and derives latent rewards as interpretable performance measures across multiple dimensions, such as height and safe control, which align with the ground truth~(GT) reward function. 
Additionally, LLM considers stability, which better aligns with the task's objectives, further elucidating its superior performance compared to baselines with dense rewards. 
Further details can be found in Appendix A.

\begin{figure}[t]
  \centering
    \includegraphics[width=\linewidth]{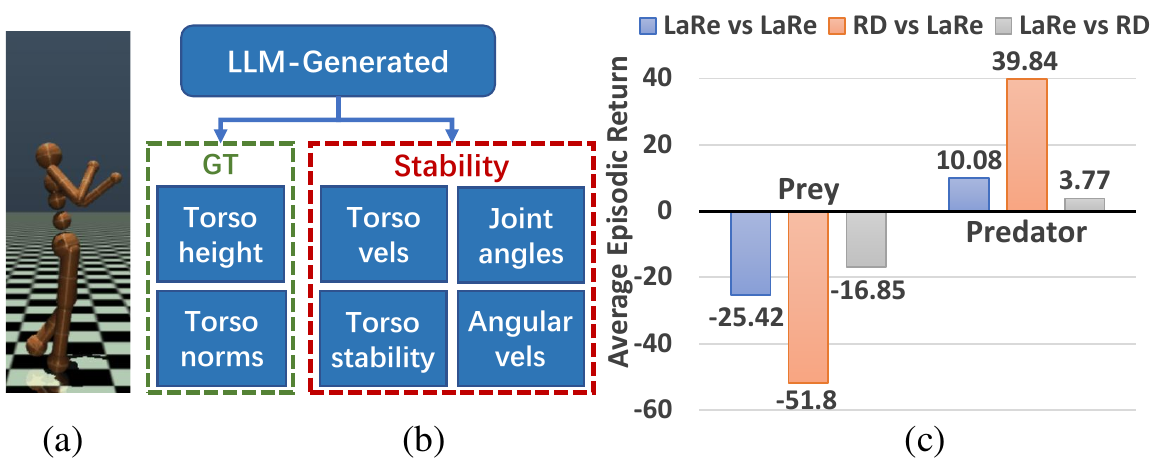}
    \caption{
    (a) The task \textit{HumanoidStandup-v4} aims to make the humanoid stand up and maintain balance.
(b) LLM-generated latent rewards additionally consider implicit factors affecting stability compared to the ground truth rewards.
    (c) Comparison between LaRe and RD on the competitive \textit{Predator-Prey (6 agents)} task. `X vs Y' represents the condition where X controls preys and Y controls predators. LaRe outperforms RD when directly pitted against it.}

    \label{fig:hetero}
\end{figure}


\begin{table}[t]
  \centering
  
  \resizebox{\linewidth}{!}{
    \begin{tabular}{c|cc|cc}
    \toprule
     \multirow{2}{*}{Tasks}& \multicolumn{2}{c|}{\texttt{corr} \small(\texttt{dims})} & \multicolumn{2}{c}{\texttt{exe\_rate}}\\
     & \small States & \small Latent Rewards & \small w/o PV&\small w/ PV\\
    \midrule
    \textit{CN~(6 agents)} & 0.02 \scriptsize(26) & \textbf{0.50 \scriptsize(5.6)} & 0\%& \textbf{100\%}\\
    \textit{
PP~(6 agents)} & 0.01 \scriptsize(28) & \textbf{0.12 \scriptsize(5.4)} & 20\%& \textbf{100\%} \\
    \textit{
HalfCheetah-v4} & 0.22 \scriptsize(17) & \textbf{0.53 \scriptsize(4.8)} &40\%& \textbf{100\%} \\
    \textit{
HumanoidStandup-v4} & \hspace{0.35em}0.20 \scriptsize(376) & \textbf{0.49 \scriptsize(5.6)} &40\%& \textbf{100\%}\\
    \bottomrule
    \end{tabular}%
    }
    \caption{\texttt{corr} denotes the average Pearson correlation coefficient.
    \texttt{dims} represents the average number of dimensions of original states or latent rewards.
    Additionally, we record the average execution rate \texttt{exe\_rate} of LLM-generated latent reward functions without pre-verification~(\small w/o PV).
    }
  \label{tab:corr}%
\end{table}%

\textbf{Reduced Reward-irrelevant Redundancy.} 
We calculate the Pearson correlation coefficient~\cite{cohen2009pearson} between each dimension of original states or LLM-generated latent rewards and ground truth dense rewards.
As shown in Table~\ref{tab:corr}, latent rewards are tighter correlated with ground truth rewards across tasks.  
Meanwhile, latent rewards' dimensions are significantly fewer than those of original states. 
The results confirm that the latent reward reduces the reward-irrelevant redundancy with task-related priors, improving reward prediction, as shown in Appendix F.5.

\textbf{Algorithm Agnostic. }
Notably, latent rewards for estimating proxy rewards are transferable to various RL backbones. 
This property ensures LaRe's application prospects, opening up possibilities to combine with real-world approaches.
We conduct detailed experiments in Appendix F.3.

\textbf{Compatible with Heterogeneous Agents.}
Latent rewards can help re-assign credits among heterogeneous agents, even in competitive scenarios. 
Like \citet{lowe2017multi}, we jointly train policies for competitive predators and preys in task \textit{Predator-Prey}.
We have the policies trained by LaRe and RD respectively compete, with preys and predators controlled by different ones.
As shown in Fig.~\ref{fig:hetero}(c), LaRe learns superior policies for both predators and preys compared to RD, suggesting enhanced credit assignment in competitive multi-agent scenarios.
This advantage can be attributed to the multifaceted nature of the latent reward.


\subsection{Ablation Studies}

\begin{figure}[t]
  \centering
    \includegraphics[width=\linewidth]{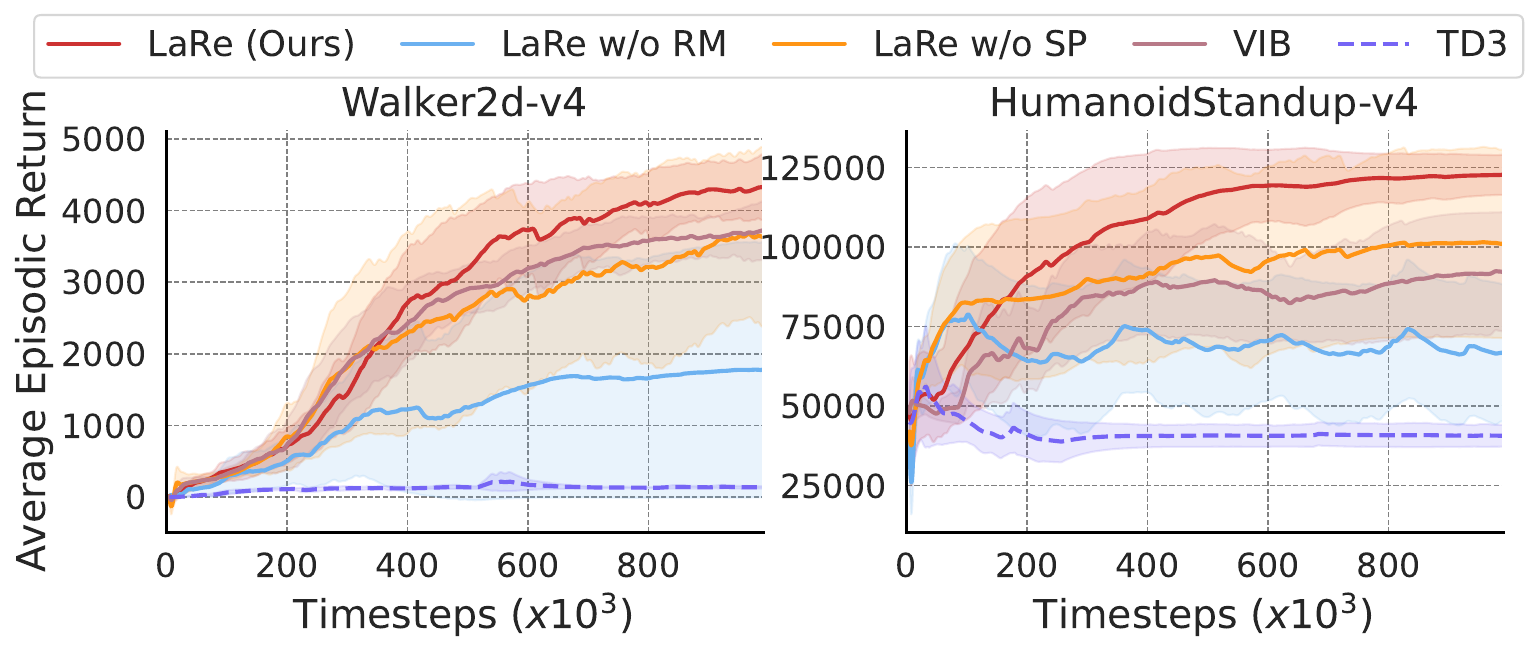}
    \caption{Ablation studies of the reward model and the proposed self-verified LLM generation, as well as comparisons of LaRe with the variational information bottleneck.}
    \label{fig:llmablation}
\end{figure}

\textbf{Reward Attributes in Latent Reward.}
To distinguish latent rewards from mere state representation,
we conduct an ablation study by removing the reward decoder model, termed ``\textbf{LaRe w/o RM}", which estimates proxy rewards by summarizing latent rewards with a sign: $\hat{r}_{sign} = \sum_{i=1}^d \text{sign}(z_r^i) \cdot z_r^i$. The signs are obtained by minimizing the estimation loss between episodic rewards and the sum of proxy rewards.
As shown in Figure~\ref{fig:llmablation}, this significant simplification outperforms the baseline with episodic rewards~(TD3), confirming that the latent reward possesses genuine reward attributes rather than just representing states.

\textbf{Self-Verification.}
We propose Self-prompting (SP) and pre-verification (PV) to reduce randomness and hallucinations in LLM inference. 
The superior performance shown in Fig.~\ref{fig:llmablation} indicates that SP effectively reduces randomness in LLM inference, resulting in improved LLM responses and RL performance.
In Table.~\ref{tab:corr}, we compare the execution rate of LLM-generated latent reward encoding functions with and without pre-verification, highlighting the significance of pre-verification.
The results validate the efficacy of our designs for integrating LLMs into RL tasks. 
Appendix F.1 examines the impact of the number of candidate responses.

\textbf{Variational Information Bottleneck.}
We propose an alternative method to utilize the Variational Information Bottleneck~(VIB)~\cite{alemi2016deep} to derive latent rewards. 
However, as shown in Fig.~\ref{fig:llmablation} and Appendix F.2, LaRe significantly outperforms VIB, which underscores the importance of LLMs serving as generalized encoders of environmental information to integrate task-related priors effectively.
We compare LaRe with LLM reward design in Appendix F.4.
\section{Conclusion}

In this work, we present LaRe, a LLM-empowered framework for credit assignment in episodic reinforcement learning with task-related prior.
The framework is centered on the latent reward, whose dimensions reflect distinct aspects of task performance evaluation.
We utilize LLM's coding abilities to address the linguistic-symbolic misalignment in integrating LLM into RL tasks and propose self-verification to ensure reliable LLM inference.
This work
(i) addresses previously overlooked research questions, including training difficulties caused by redundancy and the multifaceted nature of mission performance evaluation;
(ii) develops a practical algorithm that achieves superior performance both theoretically and empirically;
(iii) advances the integration of LLM prior knowledge into RL through semantically interpretable multifaceted performance evaluation.

\textbf{Limitations \& Future Works: }
Our work focuses on tasks with symbolic states using LLM, while future research might broaden the application to images-based tasks by employing advanced multi-modal LLMs.
As LLM capabilities advance, prompt design for LaRe will become easier, requiring less task-specific information and further reducing the manual workload.
This work lays the foundation for enhancing LLM-powered credit assignment in RL, with promising potential for complex decision-making scenarios.
\section{Acknowledgments}
This work was supported by the National Key R\&D Program of China under Grant 2018AAA0102801.

\bibliography{aaai25}
\newpage

\onecolumn
\appendix

\section{A. LLM Prompts and Responses}\label{app:prompts}

Below are the prompt template, example task information, and LLM response in our work. For further details, please refer to the code. Our prompt design incorporates the chain-of-thought technique~\cite{wei2022chain}.

\begin{tcolorbox}[title = {Prompt Template}]
\textbf{ROLE INSTRUCTION:} \\
You are good at understanding tasks and writing python codes.\\
You should fully understand the provided task and describe the exact observation and action form in the current map. \\
Then, based on your understanding and the goal of the task, analyze potential positive and negative behaviours or statuses that can be reflected in the observation and action.\\
Finally, write an evaluation function that returns factors evaluating the current status from different aspects. \\

Note:\\
1. Do not use information you are not given! \\
2. Focus on the most relevant evaluation factors and use information in observation as little as possible. \\
3. The code should be generic, complete and not contain omissions! \\
4. Avoid dividing by zero!\\
5. The input variable is in the form of (batch\_size, dim), please return a list of several evaluation factor arrays, each in the form of (batch\_size, 1). \\

Please think step by step and adhere to the following JSON format (just replace the () with your answer):\\
\{\\
\hbox{\ \ \ \ }Understand: (your thought about the task),\\
\hbox{\ \ \ \ }Analyze: (think step by step and analyze potential positive and negative behaviors or statuses that can be reflected in which part of the observation and action), \\
\hbox{\ \ \ \ }Functions: (a python function with the form of `def evaluation\_func(observation, action): ... return [a list of evaluation factor arrays]')\\
\}
\\
\\
\textbf{SELF-PROMPTING:} \\

You have generated several evaluation functions. 
Please summarize them and generate a new evaluation function that incorporates all the evaluation factors.
If there are other important evaluation factors, please include them as well.

\end{tcolorbox}

\begin{tcolorbox}[title = {Task Information of \textit{HumanoidStandup-v4}}]

\textbf{TASK DESCRIPTION:}\\
The 3D bipedal robot is designed to simulate a human. It has a torso (abdomen) with a pair of legs and arms. 
The legs each consist of two links, and so the arms (representing the knees and elbows respectively). 
The environment starts with the humanoid laying on the ground. 
The goal of the environment is to make the humanoid stand up and then keep it standing by applying torques on the various hinges. 
Maintain safe control and prevent excessive torque norm. \\

\textbf{STATE-ACTION FORM:}\\
The observation is 376 dimensions, the first 45 of which are about all the position and velocity: \\
\hbox{\ \ \ \ }0: position: z-coordinate of the torso (centre); \\
\hbox{\ \ \ \ }1: angle: x-orientation of the torso (centre); \\
\hbox{\ \ \ \ }2: angle: y-orientation of the torso (centre); \\
\hbox{\ \ \ \ }3: angle: z-orientation of the torso (centre); \\
\hbox{\ \ \ \ }4: angle: w-orientation of the torso (centre); \\
\hbox{\ \ \ \ }5: angle: z-angle of the abdomen (in lower\_waist); \\
\hbox{\ \ \ \ }6: angle: y-angle of the abdomen (in lower\_waist); \\
\hbox{\ \ \ \ }7: angle: x-angle of the abdomen (in pelvis); \\
\hbox{\ \ \ \ }8: angle: x-coordinate of angle between pelvis and right hip (in right\_thigh); \\
\hbox{\ \ \ \ }9: angle: z-coordinate of angle between pelvis and right hip (in right\_thigh); \\
\hbox{\ \ \ \ }10: angle: y-coordinate of angle between pelvis and right hip (in right\_thigh);\\
\end{tcolorbox}
\begin{tcolorbox}[title = {Task Information of \textit{HumanoidStandup-v4}}]
\hbox{\ \ \ \ }$\quad$11: angle: angle between right hip and the right shin (in right\_knee); \\
\hbox{\ \ \ \ }12: angle: x-coordinate of angle between pelvis and left hip (in left\_thigh); \\
\hbox{\ \ \ \ }13: angle: z-coordinate of angle between pelvis and left hip (in left\_thigh); \\
\hbox{\ \ \ \ }14: angle: y-coordinate of angle between pelvis and left hip (in left\_thigh); \\
\hbox{\ \ \ \ }15: angle: angle between left hip and the left shin (in left\_knee); \\
\hbox{\ \ \ \ }16: angle: coordinate-1 (multi-axis) angle between torso and right arm (in right\_upper\_arm); \\
\hbox{\ \ \ \ }17: angle: coordinate-2 (multi-axis) angle between torso and right arm (in right\_upper\_arm); \\
\hbox{\ \ \ \ }18: angle: angle between right upper arm and right\_lower\_arm; \\
\hbox{\ \ \ \ }19: angle: coordinate-1 (multi-axis) angle between torso and left arm (in left\_upper\_arm); \\
\hbox{\ \ \ \ }20: angle: coordinate-2 (multi-axis) angle between torso and left arm (in left\_upper\_arm); \\
\hbox{\ \ \ \ }\ldots \textit{(Omit for brevity. Please refer to the code for the complete state form.)}\\

The action is 17 dimensions. An action represents the torques applied at the hinge joints.
\end{tcolorbox}

\begin{tcolorbox}[title = {An example of LLM-generated responses in \textit{HumanoidStandup-v4}}]

\{\\
\hbox{\ \ \ \ }\textbf{Understand}: \\
\hbox{\ \ \ \ }\hbox{\ \ \ \ }The task involves controlling a 3D bipedal robot to make it stand up from a laying position and maintain its standing by applying torques on various hinges. The observation space is 376-dimensional, with the first 45 dimensions giving essential positional and velocity information of various parts of the robot. The goal is to evaluate the robot\'s current status using relevant factors derived from this observation.\\
\hbox{\ \ \ \ }\textbf{Analyze}: \\
\hbox{\ \ \ \ }\hbox{\ \ \ \ }To evaluate the robot's status, we need to consider the following key aspects from the observation and action: \\
\hbox{\ \ \ \ }\hbox{\ \ \ \ }1. The elevation of the torso (to see if the robot is standing). \\
\hbox{\ \ \ \ }\hbox{\ \ \ \ }2. The angles and angular velocities of the torso and limbs (to check for stability). \\
\hbox{\ \ \ \ }\hbox{\ \ \ \ }3. The direction and magnitude of torques (to ensure control and safety). \\
\hbox{\ \ \ \ }\hbox{\ \ \ \ }Positive behaviors include the torso being at a higher position (indicating standing), angular velocities being small (indicating stability), and torques being within safe limits. \\
\hbox{\ \ \ \ }\hbox{\ \ \ \ }Negative behaviors include the torso being low (indicating falling), high angular velocities (indicating instability), and excessive torques (indicating unsafe control).\\
\hbox{\ \ \ \ }\textbf{Functions}:\\
\hbox{\ \ \ \ }\hbox{\ \ \ \ }def evaluation\_func(observation, action):\\    \hbox{\ \ \ \ }\hbox{\ \ \ \ }\hbox{\ \ \ \ }import numpy as np\\    
\hbox{\ \ \ \ }\hbox{\ \ \ \ }\hbox{\ \ \ \ }batch\_size = observation.shape[0]\\    \hbox{\ \ \ \ }\hbox{\ \ \ \ }\hbox{\ \ \ \ }eval\_factors = []\\ 
\hbox{\ \ \ \ }\hbox{\ \ \ \ }\hbox{\ \ \ \ }\# Factor 1: Height of torso (z-coordinate)\\    \hbox{\ \ \ \ }\hbox{\ \ \ \ }\hbox{\ \ \ \ }height\_torso = observation[:, 0].reshape(batch\_size, 1)\\  
\hbox{\ \ \ \ }\hbox{\ \ \ \ }\hbox{\ \ \ \ }eval\_factors.append(height\_torso)\\   
\hbox{\ \ \ \ }\hbox{\ \ \ \ }\hbox{\ \ \ \ }\# Factor 2: Torso orientation stability (desired upright orientation)\\   
\hbox{\ \ \ \ }\hbox{\ \ \ \ }\hbox{\ \ \ \ }torso\_orientation = np.abs(observation[:, 1:5])  \# Ideal: [0, 0, 0, 1] for x, y, z, w\\  
\hbox{\ \ \ \ }\hbox{\ \ \ \ }\hbox{\ \ \ \ }orientation\_stability = 1 - np.sum(torso\_orientation, axis=1, keepdims=True)\\   
\hbox{\ \ \ \ }\hbox{\ \ \ \ }\hbox{\ \ \ \ }eval\_factors.append(orientation\_stability)\\
\hbox{\ \ \ \ }\hbox{\ \ \ \ }\hbox{\ \ \ \ }\# Factor 3: Torso linear and angular velocities (low values are better)\\    
\hbox{\ \ \ \ }\hbox{\ \ \ \ }\hbox{\ \ \ \ }linear\_velocities = np.linalg.norm(observation[:, 22:25], axis=1, keepdims=True)\\  
\hbox{\ \ \ \ }\hbox{\ \ \ \ }\hbox{\ \ \ \ }angular\_velocities = np.linalg.norm(observation[:, 25:28], axis=1, keepdims=True)\\    
\hbox{\ \ \ \ }\hbox{\ \ \ \ }\hbox{\ \ \ \ }movement\_stability = 1 / (1 + linear\_velocities + angular\_velocities)  \# To normalize\\  
\hbox{\ \ \ \ }\hbox{\ \ \ \ }\hbox{\ \ \ \ }eval\_factors.append(movement\_stability)\\
\hbox{\ \ \ \ }\hbox{\ \ \ \ }\hbox{\ \ \ \ }\# Factor 4: Joint angles in neutral/expected ranges\\    
\hbox{\ \ \ \ }\hbox{\ \ \ \ }\hbox{\ \ \ \ }joint\_angles = observation[:, 5:22]  \# Examples include abdomen, hips, knees, and arms\\  
\hbox{\ \ \ \ }\hbox{\ \ \ \ }\hbox{\ \ \ \ }neutral\_angles = np.abs(joint\_angles)  \# Assuming small values are better\\   
\hbox{\ \ \ \ }\hbox{\ \ \ \ }\hbox{\ \ \ \ }joint\_angle\_stability = 1 / (1 + np.sum(neutral\_angles, axis=1, keepdims=True))\\
\hbox{\ \ \ \ }\hbox{\ \ \ \ }\hbox{\ \ \ \ }eval\_factors.append(joint\_angle\_stability)\\
\end{tcolorbox}
\begin{tcolorbox}[title = {An example of LLM-generated responses in \textit{HumanoidStandup-v4}}]
$\qquad\quad$\#Factor 5: Angular velocities of joints (low values are better)\\    
\hbox{\ \ \ \ }\hbox{\ \ \ \ }\hbox{\ \ \ \ }joint\_angular\_velocities = np.linalg.norm(observation[:, 28:45], axis=1, keepdims=True)\\    
\hbox{\ \ \ \ }\hbox{\ \ \ \ }\hbox{\ \ \ \ }joint\_velocity\_stability = 1 / (1 + joint\_angular\_velocities)  \# Normalize\\    \hbox{\ \ \ \ }\hbox{\ \ \ \ }\hbox{\ \ \ \ }eval\_factors.append(joint\_velocity\_stability)\\ 
\hbox{\ \ \ \ }\hbox{\ \ \ \ }\hbox{\ \ \ \ }\# Factor 6: Torque norms (low values are better)\\    
\hbox{\ \ \ \ }\hbox{\ \ \ \ }\hbox{\ \ \ \ }torque\_norms = np.linalg.norm(action, axis=1, keepdims=True)\\    
\hbox{\ \ \ \ }\hbox{\ \ \ \ }\hbox{\ \ \ \ }torque\_efficiency = 1 / (1 + torque\_norms)\\  
\hbox{\ \ \ \ }\hbox{\ \ \ \ }\hbox{\ \ \ \ }eval\_factors.append(torque\_efficiency)\\
\hbox{\ \ \ \ }\hbox{\ \ \ \ }return eval\_factors"\\
\}

\end{tcolorbox}
\section{B. Proof}\label{app:proof}

\textbf{Notations and Definitions: }
Following ~\citet{efroni2021reinforcement}, we introduce additional notations and definitions:
$K\in\mathbb{N}$ denotes the total number of episodes and $k\in\{1,...,K\}$ denotes an index of an episode.
$T\in\mathbb{N}$ denotes the episode length and $t\in\{1,...,T\}$ denotes a timestep in an episode.
$\hat{\nu}^\phi_k\in\mathbb{R}^{T\mathcal{\norm{D}}}$ is the empirical latent reward visitation vector  given by $\hat{\nu}^\phi_{k}(z_r,t) = \mathbb{I}(z_r=z_{r,t}^k) \in [0,1]$.
Let $\Bar{r}\in\mathbb{R}^{T\mathcal{\norm{D}}}$ denote the noisy version of the true reward function $r$ in the latent reward space.
Then, the episodic reward of $k$-th episode can be represented as $R_k=\Bar{r}^\top\hat{\nu}^{\phi}_k$.
Additionally, we define the empirical latent reward frequency vector $\hat{h}^\phi_k\in \R^{\mathcal{\norm{D}}}$ where $\hat{h}^\phi_k(z_r)=\sum_{t=1}^T \hat{\nu}^\phi_{k}(z_r,t) \in [0,T]$. Finally, for any positive definite matrix $M\in\R^{m\times m}$ and any vector $x\in\R^m$, we define $\norm*{x}_M=\sqrt{x^\top Mx}$.

We estimate the reward by a regularized least-squares estimator, i.e., for some $\lambda>0$,
$$\hat{r}^\phi_k\in \arg\min_r \br*{\sum_{l=1}^k (\inner{\hat{h}^\phi_l,r} - R_l)^2 + \lambda I_{\mathcal{\norm{D}}}},$$
which has a closed form solution
$$\hat{r}^\phi_k = ((H^{\phi}_k)^\top H^{\phi}_k +\lambda I_{\mathcal{\norm{D}}})^{-1}Y^\phi_k \eqdef (A^\phi_{k})^{-1} Y^\phi_k
$$
where $H_k^\phi\in \mathbb{R}^{k\times \mathcal{\norm{D}}}$ is a matrix with $\{(\hat{h}_l^\phi)^\top\}_{l=1}^k$ in its rows. $Y^\phi_k =\sum_{l=1}^k\hat{h}^\phi_l R_l\in \R^{\mathcal{\norm{D}}}$ and $A^\phi_{k}= (H_k^\phi)^\top H_k^\phi +\lambda I_{\mathcal{\norm{D}}}\in\R^{\mathcal{\norm{D}}\times \mathcal{\norm{D}}}$.

\begin{theorem}[\citet{abbasi2011improved}, Theorem 2]

Let $\brc*{F_k}_{k=0}^\infty$ be a filtration. Let $\brc*{\eta_k}_{k=0}^\infty$ be a real-valued stochastic process such that $\eta_k$ is $F_k$-measurable and $\eta_k$ is conditionally $\delta$-sub-Gaussian for $\delta\geq 0$. Let $\brc*{x_k}_{k=0}^\infty$ be an $\R^m$-valued stochastic process s.t. $X_k$ is $F_{k-1}$-measurable and $\norm{x_k}\leq L$. Define $y_k = \inner{x_k,w}+\eta_t$ and assume that $\norm{w}_2\leq R $ and $\lambda>0$. Let
\begin{align*}
    \hat{w}_t = (X_k^TX_k+\lambda I_d)^{-1} X_k^T Y_k,
\end{align*}
where $X_k$ is the matrix whose rows are $x_1^T,..,x_t^T$ and $Y_k = (y_1,..,y_k)^T$. Then, for any $\delta>0$ with probability at least $1-\delta$ for all, $t\geq 0$ $w$ lies in the set
\begin{align*}
    \brc*{w\in \R^m: \norm{\hat{w}_k - w}_{V_k} \leq \delta\sqrt{m\log{\frac{1+kL^2/\lambda}{\delta}}} + \lambda^{1/2}R}.
\end{align*}
\end{theorem}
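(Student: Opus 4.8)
The final statement is the self-normalized confidence-ellipsoid bound for regularized least squares, so the plan is to reduce it to a tail bound on the noise vector $S_k \eqdef X_k^\top E_k = \sum_{i=1}^k \eta_i x_i$ measured in the $V_k^{-1}$-norm, and then pass from that bound to the estimation error by pure linear algebra. Here $E_k=(\eta_1,\dots,\eta_k)^\top$, $V_k = X_k^\top X_k + \lambda I_m$, and I write $\sigma$ for the sub-Gaussian parameter (the statement overloads $\delta$ both for this and for the confidence level). The central object is the self-normalized quantity $\norm{S_k}_{V_k^{-1}}$, and the goal of the hard part is to show that, uniformly over all $k\ge 0$, with probability at least $1-\delta$,
\begin{equation*}
\norm{S_k}_{V_k^{-1}}^2 \le 2\sigma^2\log\br*{\frac{\det{V_k}^{1/2}\det{\lambda I_m}^{-1/2}}{\delta}}.
\end{equation*}

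First I would set up the method of mixtures. For a fixed direction $s\in\R^m$ define $M_k^s \eqdef \exp\br*{\sigma^{-1}\inner{s,S_k} - \tfrac12\inner{s,\bar V_k s}}$ with $\bar V_k \eqdef V_k - \lambda I_m = \sum_{i\le k} x_i x_i^\top$. Because $x_i$ is $F_{i-1}$-measurable and $\eta_i$ is conditionally $\sigma$-sub-Gaussian, the one-step factor satisfies $\Ex{\exp\br*{\sigma^{-1}\inner{s,x_i}\eta_i - \tfrac12\inner{s,x_i}^2}\mid F_{i-1}}\le 1$, so $M_k^s$ is a nonnegative supermartingale with $\Ex{M_0^s}\le 1$. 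I would then mix over $s$ against the centered Gaussian density $f$ of $\mathcal N(0,\lambda^{-1}I_m)$, setting $\bar M_k \eqdef \int_{\R^m} M_k^s\, f(s)\,\mathrm{d}s$; by Tonelli $\bar M_k$ is again a nonnegative supermartingale with $\Ex{\bar M_k}\le 1$, and completing the square in $s$ evaluates the Gaussian integral in closed form to
\begin{equation*}
\bar M_k = \br*{\frac{\det{\lambda I_m}}{\det{V_k}}}^{1/2}\exp\br*{\frac{1}{2\sigma^2}\norm{S_k}_{V_k^{-1}}^2}.
\end{equation*}
Applying Ville's maximal inequality to $\bar M_k$ — with the standard stopped-supermartingale argument so that the event holds simultaneously for every $k\ge0$ rather than a single fixed $k$ — gives $\Pr\br*{\sup_k \bar M_k \ge 1/\delta}\le\delta$, and taking logarithms recovers the displayed self-normalized bound.

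Finally I would convert this into the estimator statement. Writing $Y_k = X_k w + E_k$ and $\hat w_k = V_k^{-1}X_k^\top Y_k$, a direct computation gives $\hat w_k - w = V_k^{-1}S_k - \lambda V_k^{-1} w$, so the triangle inequality in the $V_k$-norm yields
\begin{equation*}
\norm{\hat w_k - w}_{V_k} \le \norm{V_k^{-1}S_k}_{V_k} + \lambda\norm{V_k^{-1}w}_{V_k} = \norm{S_k}_{V_k^{-1}} + \lambda\sqrt{w^\top V_k^{-1}w}.
\end{equation*}
Since $V_k \succeq \lambda I_m$ we have $V_k^{-1}\preceq \lambda^{-1}I_m$, so the second term is at most $\sqrt{\lambda}\norm{w}\le\sqrt{\lambda}R$. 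For the first term I would bound the determinant using $\norm{x_i}\le L$: the eigenvalues of $V_k$ sum to $\trace{V_k}\le \lambda m + kL^2$, so AM--GM gives $\det{V_k}\le(\lambda + kL^2/m)^m$ and hence $\det{V_k}^{1/2}\det{\lambda I_m}^{-1/2}\le(1+kL^2/\lambda)^{m/2}$ after the mild relaxation used in the statement; substituting into the self-normalized bound recovers the claimed $\sigma\sqrt{m\log\br*{(1+kL^2/\lambda)/\delta}}+\sqrt{\lambda}R$ form. The main obstacle is the supermartingale/method-of-mixtures step: establishing the one-step supermartingale property from conditional sub-Gaussianity, justifying the interchange of integration with the log-det evaluation, and — most delicately — upgrading the fixed-$k$ Markov/Ville bound to one that holds uniformly over all $k$ via the stopped process. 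The remaining algebra (estimator decomposition and the trace/eigenvalue determinant bound) is routine.
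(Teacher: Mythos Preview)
Your proposal is correct and is essentially the standard argument from \citet{abbasi2011improved}: the method-of-mixtures supermartingale construction, Ville's maximal inequality for the uniform-in-$k$ guarantee, followed by the estimator decomposition $\hat w_k - w = V_k^{-1}S_k - \lambda V_k^{-1}w$ and the trace/AM--GM determinant bound. There is nothing to fault in the outline.

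However, there is no ``paper's own proof'' to compare against here. In this paper the theorem is not proved but merely \emph{quoted} from \citet{abbasi2011improved} as a black-box tool; the paper immediately applies it (with $x_k = \hat h_k^\phi$, $w = r$, $m = \norm{\mathcal D}$, $L = T$, $R = \sqrt{\norm{\mathcal D}}$, and $\sigma = \sqrt{T/4}$) to obtain Proposition~\ref{proposition: concentration of reward}. So while your write-up is a faithful reconstruction of the original Abbasi-Yadkori--P\'al--Szepesv\'ari proof, the paper itself simply cites the result and moves on.
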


The theorem above can be extended to our setting, as stated in Proposition~\ref{proposition: concentration of reward}:

\rewardConcentration*
\begin{proof}
We define the stochastic process $\eta_k = \sum_{t=1}^T \br*{\Bar{r}(z_{r,t}^k)-r(z_{r,t}^k)} = \sum_{t=1}^T\Bar{r}(z_{r,t}^k) - r^\top\hat{h}^\phi_k$ and the filtration $\tilde{F}_k=\sigma\br*{\hat{h}^\phi_1,\dots,\hat{h}^\phi_{k+1},\eta_1\dots,\eta_k}$. Notably, $\hat{h}^\phi_k \in \mathbb{R}^d$ is $F_{k-1}$ measurable, $\eta_k$ is $F_k$ measurable and that $\eta_k$ is $\sqrt{T/4}$ sub-Gaussian given $F_{k-1}$, as a (centered) sum of $T$ conditionally independent random variables bounded in $[0,1]$.
Obviously, $\norm{\hat{h}^\phi_k}_2\le\norm{\hat{h}^\phi_k}_1=T$. Following \cite{efroni2021reinforcement}, we assume that $\norm{r}_2\le\sqrt{\mathcal{\norm{D}}}$.

Then, applying Theorem 2 of \citet{abbasi2011improved} mentioned above, for any $\frac{\delta}{10}>0$, with probability at least $1-\frac{\delta}{10}$, we derive that
\begin{align*}
    \forall k\ge0,\, \norm{r-\hat{r}^\phi_k}_{A^\phi_{k}} \leq  \sqrt{\frac{1}{4}T \mathcal{\norm{D}}\log\br*{\frac{1+kT^2/\lambda}{\delta/10}}}+\sqrt{\lambda \mathcal{\norm{D}}} \eqdef l^\phi_k. 
\end{align*}

Compared to Proposition 1 in \citet{efroni2021reinforcement}, since $\mathcal{\norm{D}}<\mathcal{\norm{S}}\mathcal{\norm{A}}$, it is evident that $l^\phi_k<l_k=\sqrt{\frac{1}{4}\mathcal{\norm{S}}\mathcal{\norm{A}}T\log\br*{\frac{1+kT^2/\lambda}{\delta/10}}}+\sqrt{\lambda \mathcal{\norm{S}}\mathcal{\norm{A}}}$, i.e., a tighter concentration bound of reward.
\end{proof}

\begin{restatable}[\citet{efroni2021reinforcement}, Lemma 8]{lemma-rst}{ExpectedPotentialLemma} \label{lemma: oldlemma8}
    Let $\brc*{F_{k}^s}_{k=1}^\infty$ be a filtration such that for any $k$ $F_{k}\subseteq F_{k}^s$. Assume that $h_{\pi_k} = \Ex{\hat{h}_k \mid F_{k-1}^s}$. Then, for all $\lambda>0$, it holds that
    \begin{align*}
        &\sum_{k=0}^{K}\Ex{ \norm{h_{\pi_k}}_{A_{k-1}^{-1}}  | F_{k-1}} \leq 4\sqrt{\frac{T^2}{\lambda}K\log\br*{\frac{2K}{\delta}}} + \sqrt{2\frac{T^2}{\lambda}K\mathcal{\norm{S}}\mathcal{\norm{A}}\log\br*{\lambda + \frac{KT^2}{\mathcal{\norm{S}}\mathcal{\norm{A}}}}}\enspace,
    \end{align*}
    uniformly for all $K>0$, with probability greater than $1-\delta.$
\end{restatable}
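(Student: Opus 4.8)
The plan is to reduce this expected-potential estimate to a deterministic elliptical-potential (log-determinant) bound by absorbing two stochastic approximations: passing from the expected visitations $h_{\pi_k}$ to the realized empirical ones $\hat{h}_k$, and passing from conditional expectations to realized values through a time-uniform martingale tail bound. The two error terms these produce are exactly the two summands in the claim, so the whole proof is an orchestration of a convexity step, a martingale concentration step, and a potential/determinant step.

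First I would remove the expected visitation by convexity. Because $F_{k-1}\subseteq F_{k-1}^s$, the Gram matrix $A_{k-1}$ is $F_{k-1}^s$-measurable, so $\norm{\cdot}_{A_{k-1}^{-1}}$ is conditionally a fixed convex function; conditional Jensen applied to $h_{\pi_k}=\Ex{\hat{h}_k \mid F_{k-1}^s}$ gives $\norm{h_{\pi_k}}_{A_{k-1}^{-1}}\le\Ex{\norm{\hat{h}_k}_{A_{k-1}^{-1}}\mid F_{k-1}^s}$ almost surely, and taking $\Ex{\cdot\mid F_{k-1}}$ with the tower property collapses this to $\Ex{\norm{h_{\pi_k}}_{A_{k-1}^{-1}}\mid F_{k-1}}\le\Ex{\norm{\hat{h}_k}_{A_{k-1}^{-1}}\mid F_{k-1}}$. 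Summing in $k$ reduces the goal to bounding $\sum_k\Ex{\norm{\hat{h}_k}_{A_{k-1}^{-1}}\mid F_{k-1}}$.

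Next I would replace each conditional expectation by its realized value. Setting $X_k\eqdef\Ex{\norm{\hat{h}_k}_{A_{k-1}^{-1}}\mid F_{k-1}}-\norm{\hat{h}_k}_{A_{k-1}^{-1}}$, note that $\hat{h}_k$ is $F_k$-measurable while $A_{k-1}$ is $F_{k-1}$-measurable, so $\{X_k\}$ is a martingale-difference sequence for $\{F_k\}$ with $\abs{X_k}\le 2T/\sqrt{\lambda}$, using $\norm{\hat{h}_k}_2\le\norm{\hat{h}_k}_1=T$ and $\lambda_{\min}(A_{k-1})\ge\lambda$. A time-uniform Azuma--Hoeffding inequality---a union bound over the horizon with a $\delta/K^2$-type allocation, which is precisely what yields the $\log(2K/\delta)$ and the constant $4$---then gives $\sum_k X_k\le 4\sqrt{\tfrac{T^2}{\lambda}K\log(2K/\delta)}$ simultaneously for all $K$ with probability at least $1-\delta$, producing the first summand.

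It remains to control the now-deterministic realized sum. Cauchy--Schwarz gives $\sum_{k=1}^K\norm{\hat{h}_k}_{A_{k-1}^{-1}}\le\sqrt{K\sum_k\norm{\hat{h}_k}_{A_{k-1}^{-1}}^2}$, the matrix-determinant identity $\det{A_k}=\det{A_{k-1}}\,(1+\norm{\hat{h}_k}_{A_{k-1}^{-1}}^2)$ converts $\sum_k\log(1+\norm{\hat{h}_k}_{A_{k-1}^{-1}}^2)$ into $\log\br*{\det{A_K}/\det{A_0}}$, and the trace--determinant (AM--GM) inequality with $\trace{A_K}\le\norm{\mathcal{S}}\norm{\mathcal{A}}\lambda+KT^2$ bounds the latter by $\norm{\mathcal{S}}\norm{\mathcal{A}}\log\br*{\lambda+KT^2/(\norm{\mathcal{S}}\norm{\mathcal{A}})}$, yielding the second summand. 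The hard part will be this potential step, since its clean form presumes $\norm{\hat{h}_k}_{A_{k-1}^{-1}}^2\le 1$, whereas here the quantity is only bounded by $T^2/\lambda$ and may exceed $1$; I would bridge the gap with the elementary inequality $u\le\tfrac{\bar u}{\log(1+\bar u)}\log(1+u)$ for $u\in[0,\bar u]$, $\bar u=T^2/\lambda$, which is exactly what introduces the multiplicative $T^2/\lambda$ factor in the second term. Threading the constants consistently through this substitution while retaining the simultaneous-in-$K$ guarantee from the martingale step is the principal bookkeeping difficulty.
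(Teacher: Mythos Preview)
The paper does not supply its own proof of this lemma; it is quoted verbatim as Lemma~8 of \citet{efroni2021reinforcement} and is then simply applied, with $\norm{\mathcal{S}}\norm{\mathcal{A}}$ replaced by $\norm{\mathcal{D}}$, to obtain Eq.~\eqref{eq7}. Your three-step plan---conditional Jensen to pass from $h_{\pi_k}$ to $\hat h_k$, a time-uniform Azuma--Hoeffding bound for the resulting bounded martingale-difference sequence, and the log-determinant elliptical-potential argument (with the $u\le\tfrac{\bar u}{\log(1+\bar u)}\log(1+u)$ correction for $\bar u=T^2/\lambda$) for the realized sum---is the standard and correct route to such bounds and is essentially the argument given in the cited source.
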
 

We define the average occupation measurement of policy $\pi$ in the the MDP's latent reward space as $h_{\pi_k}^\phi=\mathbb{E}\left[\hat{h}_k^\phi|F^s_{k-1}\right]\in\mathcal{D}$.
Given Lemma \ref{lemma: oldlemma8}, we can easily derive that with probability greater than $1-\delta/2$, it holds that

\begin{align}
    &\sum_{k=0}^{K}\Ex{ \norm{h^\phi_{\pi_k}}_{(A^\phi_{k-1})^{-1}}  | F_{k-1}} \leq 4\sqrt{\frac{T^2}{\lambda}K\log\br*{\frac{4K}{\delta}}} + \sqrt{2\frac{T^2}{\lambda}K\mathcal{\norm{D}}\log\br*{\lambda + \frac{KT^2}{\mathcal{\norm{D}}}}}\enspace,
    \label{eq7}
\end{align}

Based on the above conclusions, we analyze the advantage of reward-irrelevant redundancy elimination in the latent reward on the regret bound of the OFUL algorithm as used before~\cite{abbasi2011improved, efroni2021reinforcement}, with the exception that the optimization objective becomes $\max_\pi\left(\left(h_\pi^\phi\right)^\top\hat{r}^\phi_{k-1}+l_{k-1}^\phi\norm{h_\pi^\phi}_{(A^\phi_{k-1})^{-1}}\right)$.

\OFULrlPer*

\begin{proof}
We define $\mathbb{G}$ as the good event that Proposition 1 and Eq.~\eqref{eq7} both hold. Consequently, it is evident that $\Pr\{\mathbb{G}\}\ge1-\frac{\delta}{10}-\frac{\delta}{2}\ge1-\delta$.

Let $\mathbb{C}_k^\phi\eqdef \{\norm{\tilde{r}-\hat{r}^\phi_k}_{A_k^\phi}\le l_k^\phi\}$. Conditioning on $\mathbb{G}$, it holds that $r\in \mathcal{C}^\phi_k$ for all $k>0$. Thus, 
\begin{align}
    (h_{\pi_k}^\phi)^\top \hat{r}^\phi_{k-1} + l^\phi_{k-1}\norm{h^\phi_{\pi_k}}_{(A^\phi_{k-1})^{-1}} = \max_\pi\left(\left(h_\pi^\phi\right)^\top\hat{r}^\phi_{k-1}+l_{k-1}^\phi\norm{h_\pi^\phi}_{(A^\phi_{k-1})^{-1}}\right) = \max_{\pi}\max_{\tilde r\in \mathcal{C}^\phi_{k-1}} (h_\pi^\phi)^\top \tilde r \geq (h^\phi_{\pi^*})^\top r, \label{eq: optimism}
\end{align}
i.e., the algorithm is optimistic. The value function of policy $\pi$ in the MDP can be defined as $V^{\phi,\pi}=r^\top h^\phi_\pi\in \mathbb{R}$. Then, the regret, which measures the performance of the agent, can be defined as
\begin{equation}
    \rho^\phi(K)\eqdef \sum_{k=1}^K(V^*-V^{\phi,\pi_k})=\sum_{k=1}^K(r^\top h^\phi_{\pi^*}-r^\top h^\phi_{\pi_k}).
\end{equation}

Following similar analysis to~\citep{abbasi2011improved, efroni2021reinforcement}, we can bound the regret as follows
\begin{align}
    &\rho^\phi(K) = \sum_{k=1}^K(r^\top h^\phi_{\pi^*}-r^\top h^\phi_{\pi_k}) \nonumber \\
    & \leq \sum_{k=1}^K (h_{\pi_k}^\phi)^\top \hat{r}^\phi_{k-1} + l^\phi_{k-1}\norm{h^\phi_{\pi_k}}_{(A^\phi_{k-1})^{-1}} -r^\top h^\phi_{\pi_k}) \tag{Eq.~\eqref{eq: optimism}} \nonumber\\ 
    &=\sum_{k=1}^K (h_{\pi_k}^\phi)^\top (\hat{r}^\phi_{k-1} -r)+ l^\phi_{k-1}\norm{h^\phi_{\pi_k}}_{(A^\phi_{k-1})^{-1}} \nonumber\\
    &\leq \sum_{k=1}^K \norm{h_{\pi_k}^\phi}_{(A^\phi_{k-1})^{-1}}\norm{ \hat{r}^\phi_{k-1} -r}_{(A^\phi_{k-1})^{-1}} + l^\phi_{k-1}\norm{h^\phi_{\pi_k}}_{(A^\phi_{k-1})^{-1}} \nonumber\\
    &\leq 2l^\phi_{K}\sum_{k=1}^K \norm{h^\phi_{\pi_k}}_{(A^\phi_{k-1})^{-1}}, \label{eq: regret bound}
\end{align}
where the last relation holds conditioning on $\mathbb{G}$ and that $l^\phi_{K}\geq l^\phi_{k}$ for all $k\leq K$.

Following \citep{efroni2021reinforcement}, we set $\lambda = T$ and observe that conditioning on $\mathbb{G}$ it holds that 
\begin{align*}
    \sum_{k=0}^{K}\Ex{ \norm{h^\phi_{\pi_k}}_{(A^\phi_{k-1})^{-1}}  | F_{k-1}} \leq \Ocal\br*{\sqrt{TK\mathcal{\norm{D}}\log\br*{\frac{TK}{\delta}}}},
\end{align*}

Combining with Proposition 1, we conclude that
\begin{equation}
    \rho^\phi(K)\leq \Ocal\br*{T \mathcal{\norm{D}} \sqrt{ K} \log\br*{\frac{KT}{\delta}}} < \Ocal\br*{T \mathcal{\norm{S}}\mathcal{\norm{A}} \sqrt{ K} \log\br*{\frac{KT}{\delta}}}.
\end{equation}

\end{proof}

\section{C. Experimental Details}
\label{app:expdetail}


\subsection{Baselines}
\begin{itemize}
    \item \textbf{RD}: learns a reward model to decompose the episodic reward into individual time steps of the trajectory~\cite{arjona2019rudder}, using the least-squares-based implementation~\cite{efroni2021reinforcement}.
    \item \textbf{IRCR}~\cite{gangwani2020learning}: involves smoothing in the trajectory space and proposes uniform reward redistribution, assigning equal contribution to each state-action pair in a trajectory.
    \item \textbf{RRD}~\cite{ren2021learning}: introduces a surrogate loss to enhance scalability through randomized Monte Carlo return estimation, bridging RD and IRCR.
    \item \textbf{RRD\_unbiased}~\cite{ren2021learning}: is a variant of RRD that provides an unbiased estimation of RD.
    \item \textbf{Diaster}~\cite{lin2024episodic}: decomposes episodic rewards into contributions assigned to two subtrajectories divided at any cut point.
    \item \textbf{AREL}~\cite{xiao2022agent}: focuses on multi-agent settings and employs attention mechanisms to capture influence across both temporal and agent dimensions.
    \item \textbf{STAS}~\cite{chen2023stas}: also utilizes attention mechanisms while leveraging the Shapley Value to redistribute each agent's payoffs.
\end{itemize}

\subsection{Implementation details and Hyperparameters}
We implemented the algorithms in MuJoCo using the official TD3 codes~\footnote{\url{https://github.com/sfujim/TD3}} and those in MPE using the author-provided STAS codes~\footnote{\url{https://github.com/zowiezhang/STAS}}, maintaining the respective hyperparameters.

The implementations and hyperparameters of RRD, RRD\_unbiased, and IRCR were sourced from the official RRD codes~\footnote{\url{https://github.com/Stilwell-Git/Randomized-Return-Decomposition}}, except that we set the subsequence length of RRD to 10 in MPE. 
For AREL, we directly adopted the reward model from the official codes~\footnote{\url{https://github.com/baicenxiao/AREL}}.
We repreduced Diaster using TD3 based on the official codebase~\footnote{\url{https://github.com/HxLyn3/Diaster}}.

Regarding LaRe, the key hyperparameter is the number of candidate responses $n$, which is set to 5 by default. We investigate the impact of varied $n$ in Appendix F.
The task information is extracted from the documentation of Gymnasium~\footnote{\url{https://gymnasium.farama.org/environments/mujoco}} for MuJoCo and PettingZoo~\cite{terry2021pettingzoo}~\footnote{\url{https://pettingzoo.farama.org/environments/mpe/}} for MPE.

\subsection{Tasks}
In this section, we briefly describe the tasks evaluated in this work.

\subsubsection{MuJoCo}

We use the MuJoCo benchmark provided by Gymnasium~\cite{towers_gymnasium_2023} and evaluate our method on four representative tasks—\textit{Reacher-v4}, \textit{Walker2d-v4}, \textit{HalfCheetah-v4}, \textit{HumanoidStandup-v4}—each with different state space dimensions. 
The maximum episode length is the same for all tasks, set at 1000 steps.
These tasks are illustrated in Figure~\ref{fig:mujocotask}.

\begin{figure}[htbp]
\centering
\includegraphics[width=0.8\linewidth]{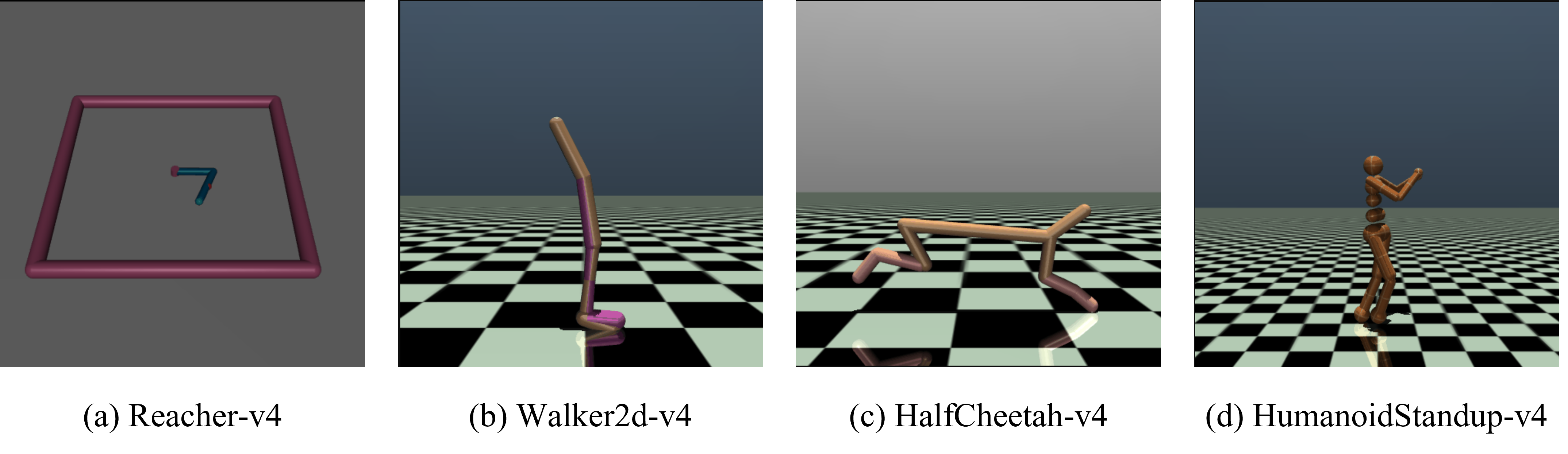}
\caption{The illustrations of tasks in MuJoCo.}
\label{fig:mujocotask}
\end{figure}

\begin{itemize}
    \item \textit{Reacher-v4}: The goal is to move a two-jointed robot arm close to a target that is spawned at a random position.
    \item \textit{Walker2d-v4}: The walker is a two-dimensional two-legged figure. The goal is to walk in the in the forward direction by applying torques on the six hinges connecting the seven body parts.
    \item \textit{HalfCheetah-v4}: The HalfCheetah is a 2-dimensional robot. The goal is to apply a torque on the joints to make the cheetah run forward as fast as possible. 
    \item \textit{HumanoidStandup-v4}: The 3D bipedal robot is designed to emulate human movement. The environment begins with the humanoid lying on the ground, and the objective is for the humanoid to stand up and maintain balance.
\end{itemize}

The details of these tasks, including observation space and action space, are listed in Table.~\ref{apptab:mujoco_tasks}.

\begin{table}[htbp]
\centering

\begin{tabular}{ccccc}
\toprule
\textbf{MuJoCo tasks} &  \textbf{observation shape} & \textbf{action shape (continuous)} \\ \midrule
\textit{Reacher-v4 }              & 11                   & 2                      \\ 
\textit{Walker2d-v4}        & 17                   & 6                            \\ 
\textit{HalfCheetah-v4 }          & 17                   & 6                   \\ 
\textit{HumanoidStandup-v4}         & 376                  & 17         
\\ \bottomrule
\end{tabular}
\caption{Details of MuJoCo tasks}

\label{apptab:mujoco_tasks}
\end{table}

\subsubsection{Multiple-Particle Environment (MPE)}

For MPE, we use two cooperative scenarios—\textit{Cooperative-Navigation~(CN)} and \textit{Predator-Prey~(PP)}—and evaluate algorithms on six tasks involving varying numbers of agents~(6, 15, 30), based on the implementations of \citet{chen2023stas}. 
These tasks are illustrated in Figure~\ref{fig:mpetask}.

\begin{itemize}
    \item \textit{Cooperative-Navigation~(CN)}: There are N agents and N landmarks. The agents must learn to cover all the landmarks while avoiding collisions.
    \item \textit{Predator-Prey~(PP)}: There are N predators and M preys. The preys are faster. Agents should control the predators to catch the prey, which are controlled by a pretrained policy. Additionally, obstacles are present on the map to impede the movement of both predators and preys.
\end{itemize}

Notably, instead of using shared rewards as in \citet{chen2023stas}, we introduce minor modifications to generate individual rewards for each agent at each transition step. 
Additionally, we evaluate algorithms in the competitive \textit{Predator-Prey} task, where policies for both predators and prey are trained jointly in an episodic setting, analogous to the setting used by \citet{lowe2017multi}.

\begin{figure}[htbp]
\centering
\includegraphics[width=0.5\linewidth]{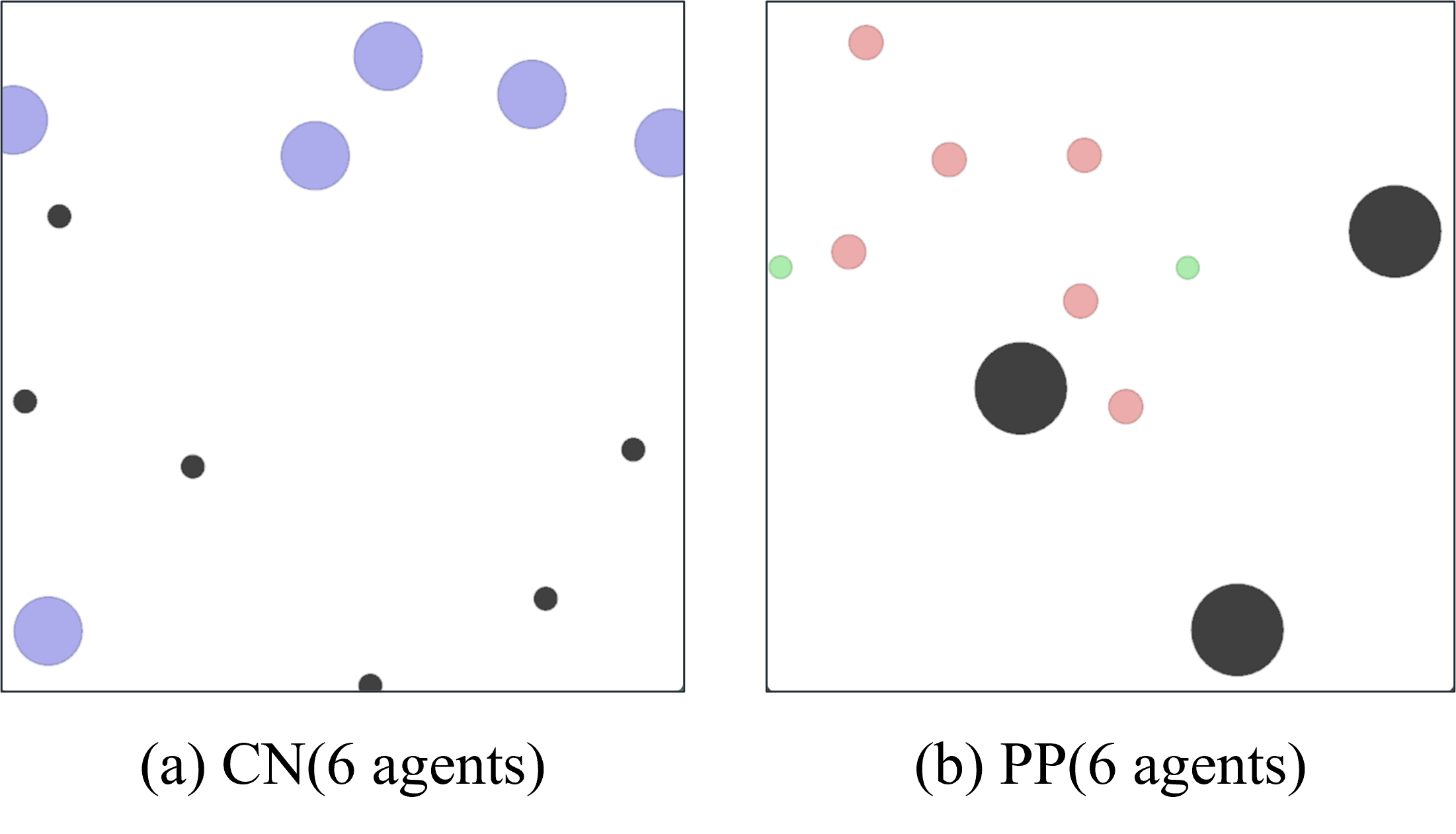}
\caption{
The illustrations of tasks in MPE. In \textit{Cooperative Navigation~(CN)}, the agents are depicted as black dots and the navigation targets as purple dots. In \textit{Predator-Prey~(PP)}, the predators are shown as pink dots, the preys as green dots, and the obstacles as black dots.}
\label{fig:mpetask}
\end{figure}

\begin{table}[htbp]
\centering
\begin{tabular}{ccccc}
\toprule
\textbf{MPE tasks} & \textbf{n\_agents}& \textbf{observation shape}  & \textbf{action num (discrete)}  \\ \midrule
\textit{CN (6 agents) }   & 6    & 26  & 5 \\ 
\textit{CN (15 agents) } & 15   & 26  & 5 \\ 
\textit{CN (30 agents) } & 30   & 26  & 5 \\ 
\textit{PP (6 agents) }   & 6    & 28  & 5 \\ 
\textit{PP (15 agents) } & 15   & 34  & 5 \\ 
\textit{PP (30 agents) } & 30   & 46  & 5 \\ 
\bottomrule
\end{tabular}

\caption{Details of MPE tasks}
\label{apptab:mpe_tasks}
\end{table}

\subsection{Resources}

We utilize a server running Ubuntu 20.04.3, equipped with 8 NVIDIA RTX 3090 GPUs and 2 AMD 7H12 CPUs, to conduct all experiments.

\section{D. A Brand New Task: \textit{Triangle Area}}

To exclude the probability that LaRe's success relies on LLM's familiarity with the chosen tasks, we've designed a brand new task based on MPE, termed \textit{Triangle Area}, which LLM has never encountered before. The task is detailed as follows:

As shown in Fig.~\ref{fig:taskarea}, there are three agents and three obstacles. The agents must cooperate to maximize the area of the triangle they form while avoiding collisions with the obstacles. 
The observation space consists of 14 dimensions, including the positions of the agents and obstacles, as well as the velocities of the agents. 
The action space for each agent is discrete, comprising five movement actions.

As shown in Fig.~\ref{fig:areacurve}, LaRe achieves superior performance compared to the baselines and is comparible with IPPO trained with dense rewards, , consistent with the main results.
The result confirms that it is the LLM's generalization capabilities that empower LaRe's effectiveness in deriving efficacy latent rewards in diverse new tasks.

\begin{figure}[htbp]
\centering
\begin{subfigure}[b]{0.27\linewidth}
    \centering
    \includegraphics[width=\linewidth]{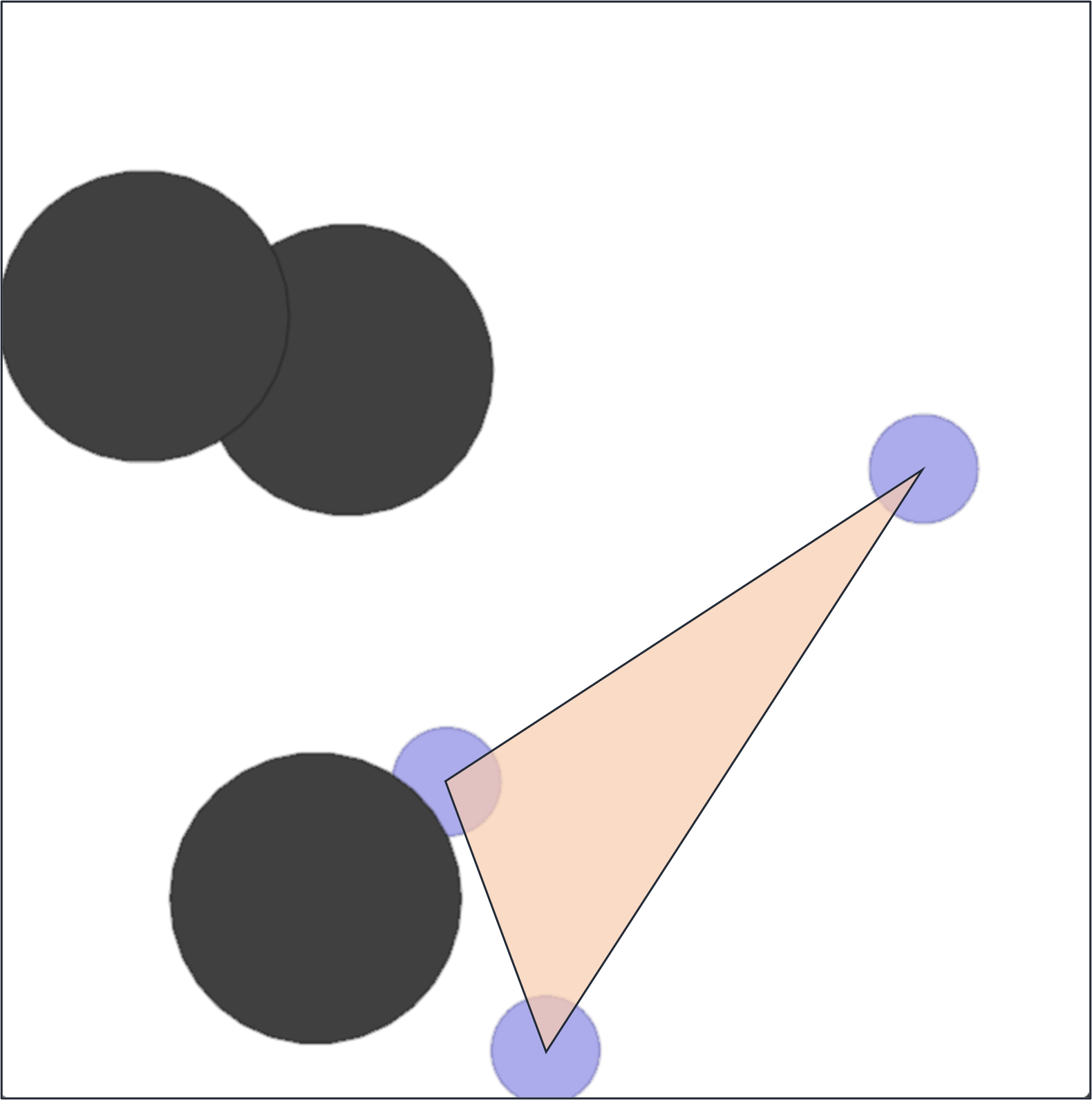}
    \captionsetup{font=footnotesize}
	\caption{\ \ }
    \label{fig:taskarea}
\end{subfigure}
\hspace{5pt}
\begin{subfigure}[b]{0.32\linewidth}
    \centering
    \includegraphics[width=\linewidth]{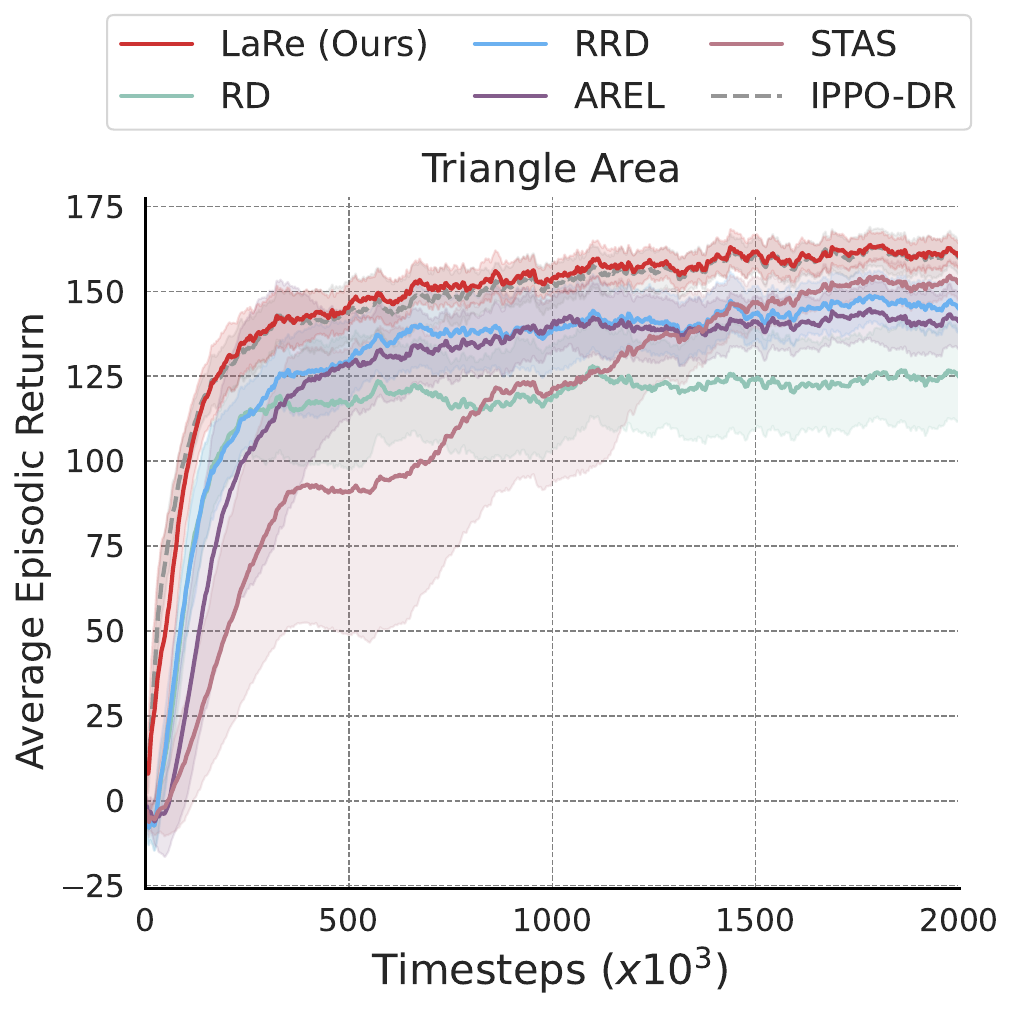}

    \captionsetup{font=footnotesize}
	\caption{\ \ }
    \label{fig:areacurve}
\end{subfigure}

\caption{
(a) The illustration depicts a newly designed task, \textit{Triangle Area}, where agents (purple dots) and obstacles (black dots) are shown. The agents must cooperate to expand the area of the encircled triangle (orange).
    (b) Average episode returns of different algorithms on the \textit{Triangle Area} task.
}
\label{fig:area}
\end{figure}

\begin{figure}[htbp]
    \centering
    \includegraphics[width=\linewidth]{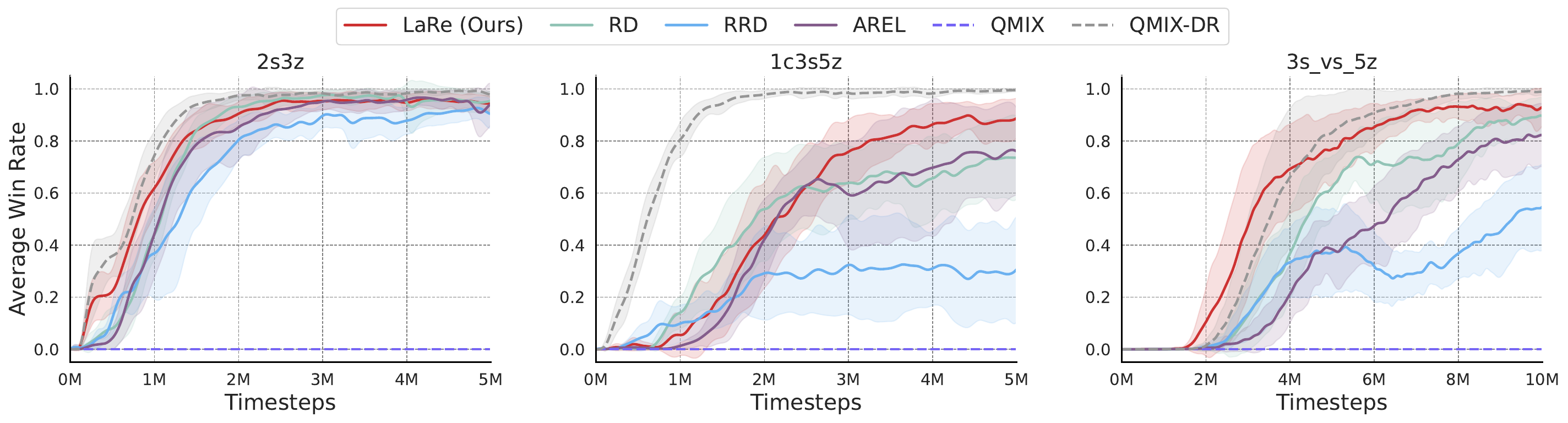}
    \caption{Average win rate of LaRe and baseline algorithms on three maps $\mathtt{2s3z, 1c3s5z}$ and $\mathtt{3s\_vs\_5z}$ in SMAC. LaRe still outperforms all baselines in this complex multi-agent environment.
    }
    \label{fig:smac}
    
\end{figure}

\begin{figure}[htbp]
    \centering
    \includegraphics[width=0.65\linewidth]{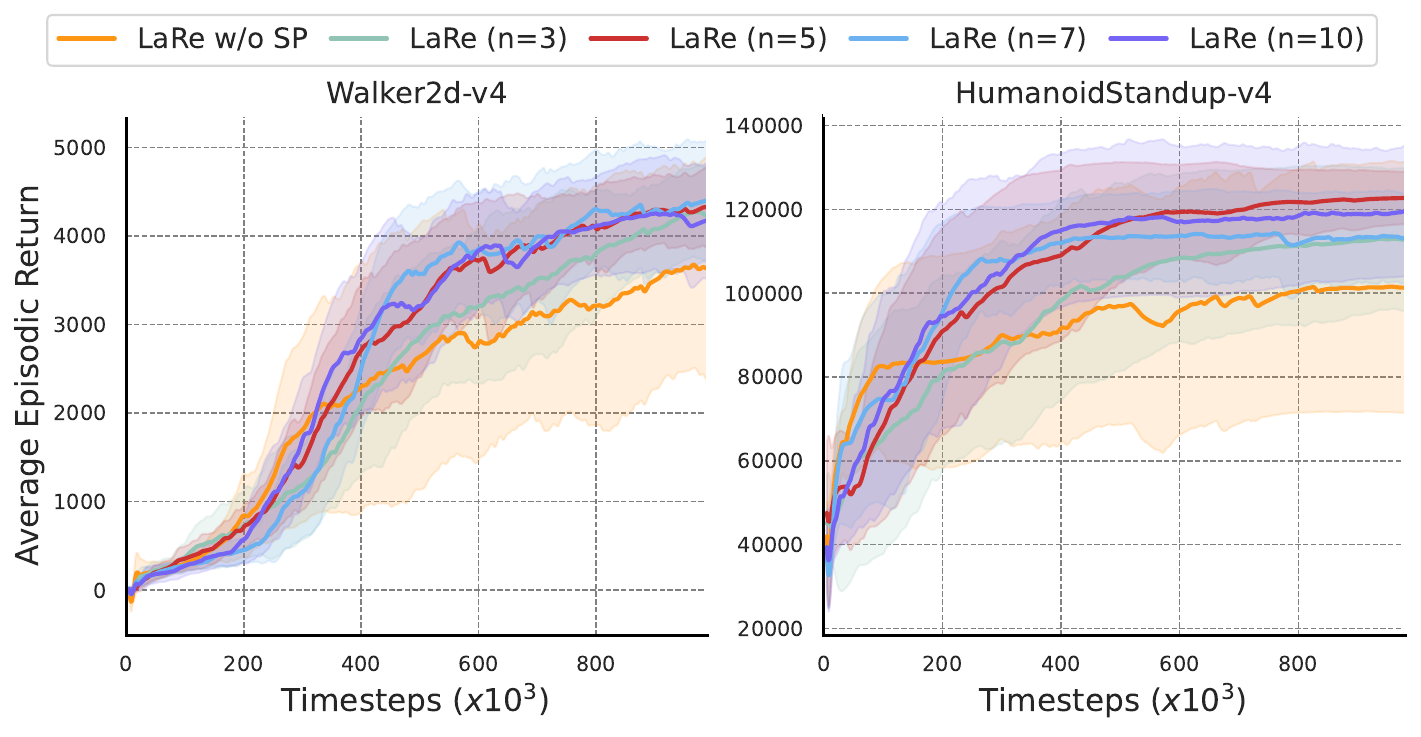}
    \caption{Average episodic return of LaRe with varying number of candidate responses $n$ on the tasks \textit{Walker2d-v4} and \textit{HumanoidStandup-v4}. 
    }
    \label{fig:llmn}
\end{figure}

\section{E. More Complicated Multi-agent tasks: SMAC}
We also conducted experiments on StarCraft Multi-agent Challenge~(SMAC)~\cite{samvelyan19smac}, a commonly-used benchmark in previous works~\cite{shao2024counterfactual, shao2023complementary},  to demonstrate that LaRe can be applied to more complex multi-agent tasks with the latent reward maintaining its effectiveness.
SMAC is a challenging task with high-dimensional state spaces and significant exploration requirements, making it difficult for independent learning methods such as IPPO~\cite{yu2022surprising} to perform well.
Therefore, we select \textbf{QMIX}~\cite{rashid2018qmix} as the base reinforcement learning algorithm, while the code and hyperparameters are adopted from PyMARL2~\footnote{\url{https://github.com/hijkzzz/pymarl2}}, which is a widely used code implementation of QMIX.
The implementations of all reward decomposition methods remains consistent as mentioned in Appendix C., except that RRD requires adjusting the hyperparameter of subsequence length for different maps.
Additionally, LaRe is also implemented based on baseline RD.
For fairness and compatibility with QMIX, all methods take the global state as input and estimate the step-wise reward as a common reward for all agents.
During the MARL training process, each algorithm is carried out with 5 random seeds.

We evaluate each algorithm on three maps: $\mathtt{2s3z, 1c3s5z}$ and $\mathtt{3s\_vs\_5z}$~(hard).
Similar to the experiments in MPE, we also include \textbf{QMIX-DR} training with ground truth dense rewards.
As shown in Fig.~\ref{fig:smac}, the baseline QMIX can hardly learn the policies due to the delayed feedback in such complicated multi-agent tasks, emphasizing the necessity of credit assignment on time scale.
Our proposed LaRe surpasses all baseline algorithms on three maps, highlighting its effectiveness in temporal credit assignment even in the complex environment.

In fact, LaRe shows a greater advantage over baselines on the relatively challenging map $\mathtt{3s\_vs\_5z}$.
Difficult maps require a more precise focus on the key factors for problem-solving, showcasing how the latent reward with incorporation of prior knowledge improves accuracy in capturing multifaceted performance factors.
This reduction of redundant information by the LLM-empowered latent reward enhances reinforcement learning process.
The performance gap between all reward decomposition methods and QMIX-DR with ground truth reward on the $\mathtt{1c3s5z}$ map is likely due to the higher heterogeneity among agents.
Though LaRe offers the multifaceted evaluation, the QMIX-based implementation of LaRe is not allowed to assign rewards for each agent, limiting its ability to inform different agents of their own contributions.
This underscores the importance of contribution allocation among agents again, as highlighted in our MPE experiments analysis.
With advancements in independent learning algorithms, combining LaRe might yield improved results in such heterogeneous tasks.

\section{F. Additional Ablation Studies}

\subsection{F.1 The Number of Candidate Responses}
To investigate the impact of the number of candidate responses, a key hyperparameter in LaRe, Fig.~\ref{fig:llmn} presents the average episodic return of LaRe with varying $n$ on the tasks \textit{Walker2d-v4} and \textit{HumanoidStandup-v4}. 
The results indicate that LaRe is not sensitive to this hyperparameter as long as a sufficient number of candidates is provided. 
This finding supports our claim that self-prompting can help reduce randomness in LLM inference.
Consequently, we set $n=5$ for all tasks in this work.

\subsection{F.2 Variational Information Bottleneck.}
We propose an alternative method for utilizing the Variational Information Bottleneck~(VIB)~\cite{alemi2016deep} to derive latent rewards. 
Similar to LaRe, we implement VIB using RD in MPE and RRD\_unbiased in MuJoCo. 
Unlike the baselines, VIB includes an encoder model with reduced output dimensions before the reward model, and minimizes the Kullback–Leibler divergence as detailed by \citet{alemi2016deep}, in addition to the reward modeling loss. 
As illustrated in Fig.\ref{fig:vib}, LaRe significantly outperforms VIB. This result highlights the critical role of LLMs as generalized encoders of environmental information in effectively integrating task-related priors.

\begin{figure}[ht]
  \centering
    \includegraphics[width=\linewidth]{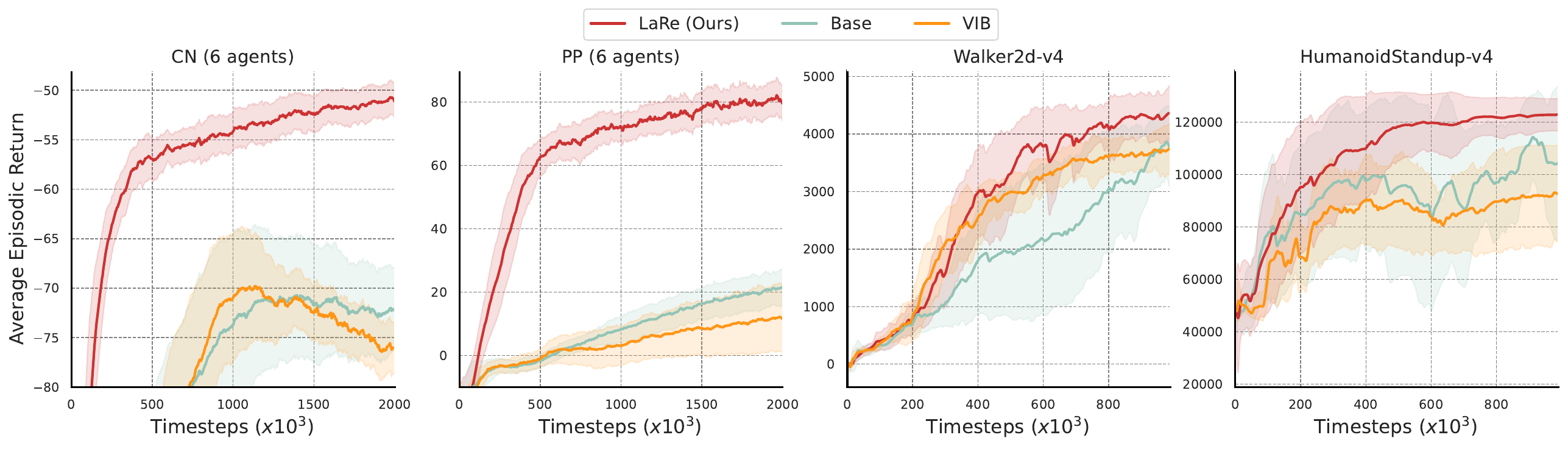}
    \caption{
    The Variational Information Bottleneck shows poor performance in extracting latent rewards, underscoring the necessity of LLMs. Note that \textbf{Base} refers to the base algorithms on which VIB is implemented, specifically RD in MPE and RRD\_unbiased in MuJoCo.}
    \label{fig:vib}
\end{figure}

\subsection{F.3 Compatible with Various RL Algorithms}

As shown in Fig.~\ref{fig:rlagnoppo}, we summarize the experimental results of combining LaRe with various RL algorithms, including TD3, SAC, DDPG, and PPO. 
The results demonstrate that our method consistently outperforms the baselines and is comparable with baselines training with dense rewards. 
This property ensures the application prospects of LaRe when combined with other real-world approaches.

\begin{figure}[htbp]
    \centering
    \includegraphics[width=\linewidth]{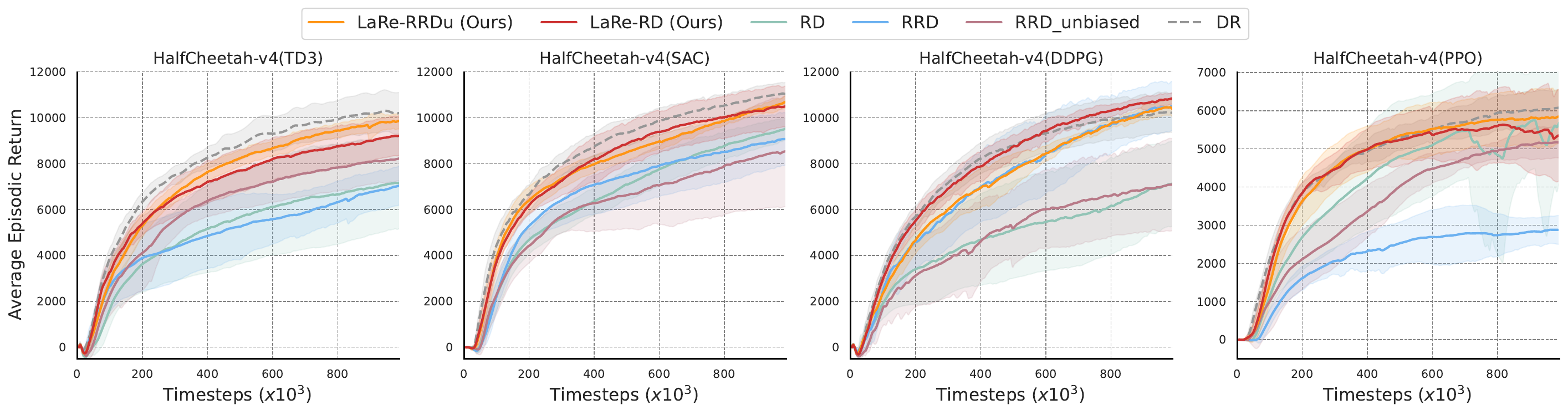}
    \caption{The complete results of combining LaRe with various RL algorithms. Note that \textbf{DR} represents training vanilla RL algorithms with dense rewards. 
    }
    \label{fig:rlagnoppo}
\end{figure}

\subsection{F.4 LLM Reward Design.}
We compare LaRe with  ``\textbf{Reward Design}", inspired by Eureka~\cite{ma2023eureka}, where a LLM designs reward functions. 
This can be seen as using LLM to define relationships between the reward and the latent reward rather than employing reward modeling. 
As shown in Figure~\ref{fig:llmrewarddesign}, LaRe consistently outperforms LLM reward design, highlighting the efficacy of incorporating latent rewards into return decomposition for optimization.
This contrasts with relying solely on LLM responses, which may establish incorrect relationships between reward factors.

\begin{figure}[htbp]
    \centering
    \includegraphics[width=0.6\linewidth]{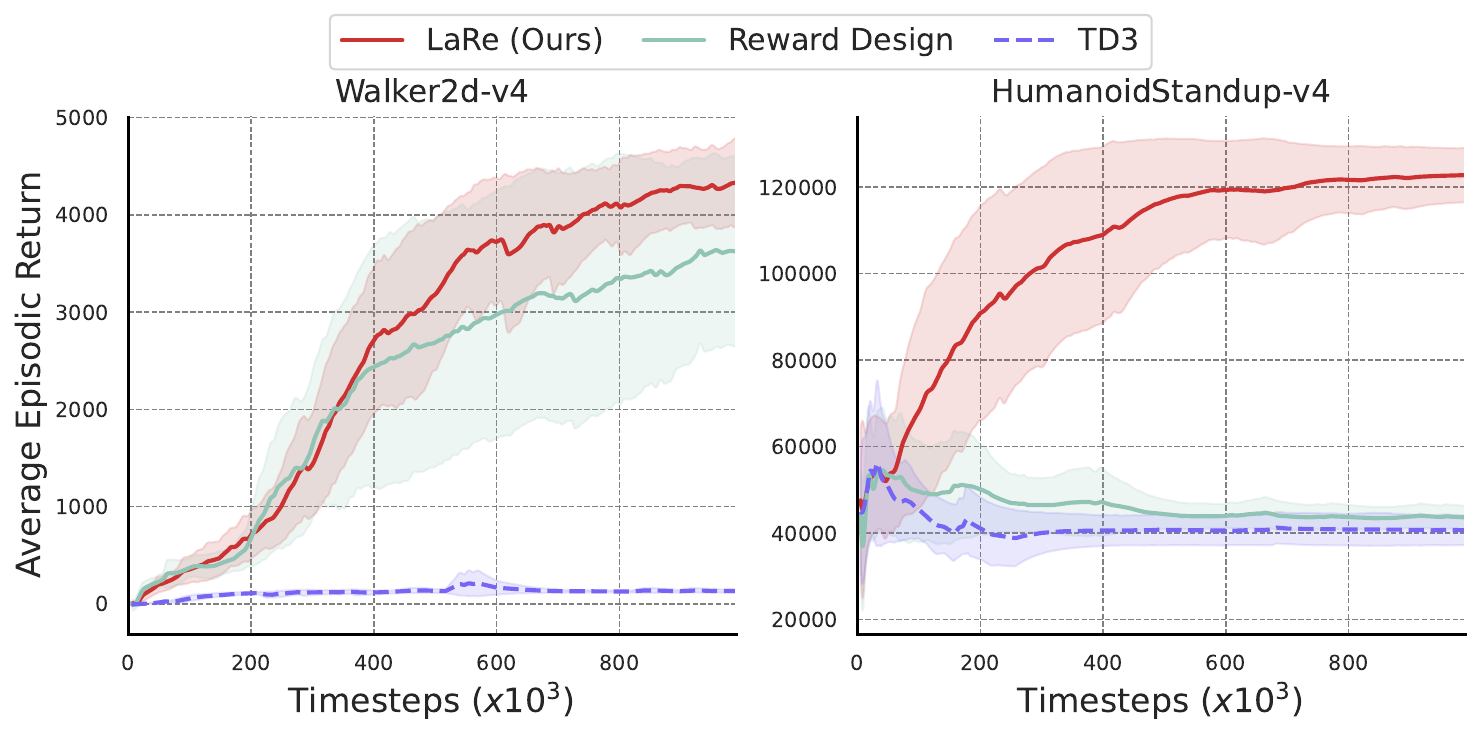}
    \caption{The comparison of LaRe with LLM reward design.
    }
    \label{fig:llmrewarddesign}
\end{figure}

\subsection{F.5 Reward Prediction Errors.}
As shown in Table.~\ref{tab:reward_pred}, we empirically evaluate the absolute difference $|\hat{r}-r|$ between proxy and ground truth rewards to show LaRe's significant improvement in attribution accuracy, as a first step in addressing ambiguous attribution with multifaceted LLM evaluation.

\begin{table}[h]
\centering
\caption{Reward Prediction Errors.}
\label{tab:reward_pred}
\begin{tabular}{lcccc}
\hline
\textbf{Task} & \textbf{LaRe} & \textbf{RRD\_unbiased} & \textbf{RD} \\ \hline
Reacher       & \textbf{0.0002 $\pm$ 0.0002} & $0.011 \pm 0.002$ & $0.004 \pm 0.003$ \\
HalfCheetah   & \textbf{0.7 $\pm$ 0.5} & $10.7 \pm 4.1$ & $9.6 \pm 8.5$ \\ \hline
\end{tabular}
\end{table}

\subsection{F.6 Extensions for Fully Sparse Reward Tasks.}
Although not aligning with the sum-form decomposition assumption, we evaluated the effectiveness of both our method and previous approaches in fully sparse reward settings, where a binary signal (indicating task completion) is provided only at the end of a trajectory. Specifically, we modified the training reward function for the \textit{HalfCheetah-v4} task to $I(final\_travel\_distance > 500)$ and used $final\_travel\_distance$ as the evaluation metric. As shown in Fig.~\ref{fig:fullysparse}, we compared LaRe against baseline methods and demonstrated that LaRe outperforms them. This superior performance may be attributed to the stronger correlation between latent rewards and task completion.

\begin{figure}[htbp]
    \centering
    \includegraphics[width=0.4\linewidth]{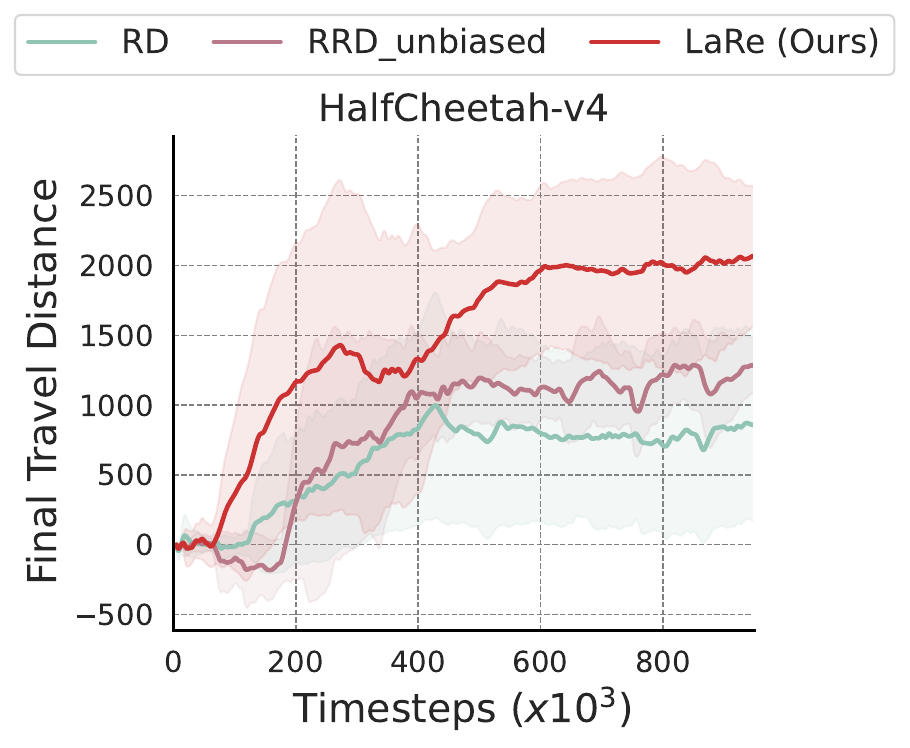}
    \caption{The comparison of LaRe against baselines on \textit{HalfCheetah-v4} with fully sparse rewards.
    }
    \label{fig:fullysparse}
\end{figure}

\section{G. Visualization}
As shown in Fig.\ref{fig:tsne}, we visualize latent rewards and original states after reducing dimensionality to 2D using t-SNE for the \textit{CN (6 agents)} task. The results illustrate that latent rewards are more strongly correlated with the ground truth rewards. This can be attributed to the multifaceted nature, which makes latent rewards better align with the ground truth rewards, suggesting easier model training.

\begin{figure}
    \centering
    \includegraphics[width=0.75\linewidth]{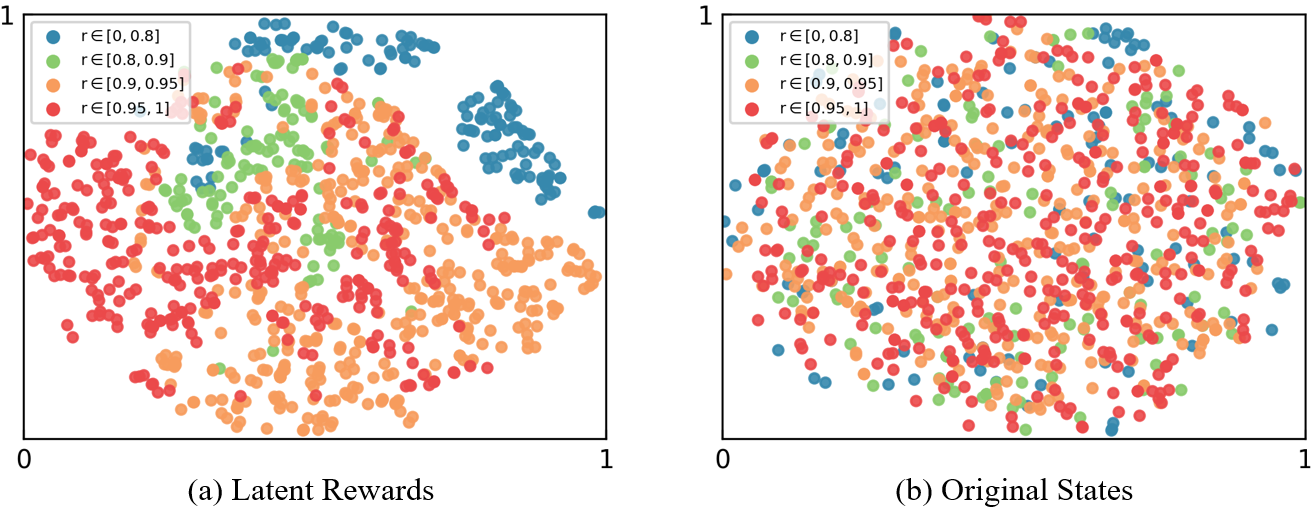}
    \caption{
    Visualization of latent rewards and original states after 2D dimensionality reduction using t-SNE for the \textit{CN (6 agents)} task.  The graph employs color coding to represent the reward values for each point.}
    \label{fig:tsne}
\end{figure}

\section{H. Discussions}

\subsection{Discussions on Ambiguous Attributions.}

By `ambiguous attributions', we mean: in methods directly fitting final rewards, reward-irrelevant features tend to lead to biased reward attributions for certain state-action pairs.

A toy example: the task is to `find a hidden door button' with a light button reachable in fewer steps. The state space includes the door and light (reward irrelevant) status. Early in learning, turning on the light is easier than opening the door, resulting more frequent `light on, door opened' trajectories. Previous methods tend to reward `light on', causing the policy to consider the light's status and get stuck in suboptimal solution. Using LLM to filter out irrelevant features (e.g., light status), LaRe enables accurate reward predictions.

Empirically, as shown in Table.~\ref{tab:reward_pred}, LaRe shows significant improvement in attribution accuracy, as a first step in addressing ambiguous attribution with multifaceted LLM evaluation.

\subsection{Future Works}

In this work, we focus primarily on tasks involving semantically meaningful symbolic states. We aim to address this by using a Vision-Language Model~(VLM)~\citep{liu2024visual} to extract and encode object information from images as latent rewards.
Additionally, this study focuses on online settings, but LaRe can be easily extended to offline episodic RL settings~\citep{shao2024counterfactual, mao2023supported, mao2024offline, mao2024doubly, mao2024supported, zhang2023sample}, representing an intriguing avenue for future research.

\end{document}